%% file: main.tex
\documentclass[runningheads, envcountsame]{llncs}
\usepackage[T1]{fontenc}
\usepackage{graphicx}
\input{preamble/packages}
\input{preamble/macros}

\begin{document}
\title{Improved Runtime Bound for the \mpoEA on \binval}
%
%
\author{Joris Belder\orcidID{0009-0007-5525-3493} \and\\ Johannes Lengler\orcidID{0000-0003-0004-7629} \and\\
Raghu Raman Ravi\orcidID{0000-0003-3641-1824}}
\authorrunning{J. Belder, J. Lengler, R. Ravi}
%
\institute{ETH Z\"{u}rich, Switzerland \\
\email{jbelder@student.ethz.ch, \{johannes.lengler,raghu.ravi\}@inf.ethz.ch}.}
\maketitle              
\begin{abstract}
We study the $(\mu+1)$~EA on the Binary Value function \binval. We show that it needs at most $O(\mu \log \mu \cdot n \log n)$ function evaluations to find the optimum when $\mu = o(n/\log n)$. This substantially improves upon the recent upper bound of $O(\mu^5 n \log(n/\mu^4))$ by Krejca, Neumann and Witt. Our results hold for several mutation operators including standard bit mutation. In particular, our bound implies that the $(\mu+1)$~EA is at most a factor $O(\log \mu \cdot \log n)$ slower on \binval than on \textsc{OneMax}. 
\end{abstract}
\keywords{Evolutionary Algorithms \and Population \and Binary Value Function \and Drift Analysis \and Runtime Analysis.}

\input{chapters/00-introduction}
\input{chapters/01-preliminaries}
\input{chapters/02-generalisation}
\input{chapters/03-zeroonemutation}
\input{chapters/04-standardmutation}

\input{chapters/05-experiments}
\input{chapters/06-conclusion}

\begin{credits}
\subsubsection{\ackname} This project was supported by the Swiss National Science Foundation [grant number 0003390].
\end{credits}
%
%
%
\bibliographystyle{splncs04}
\bibliography{bib/refs}

\end{document}

%% file: preamble/packages.tex
\usepackage[utf8]{inputenc}

\usepackage[sc]{mathpazo}

\usepackage{tikz}
\usetikzlibrary{positioning, arrows.meta}

\usepackage{amsmath,amssymb,amsfonts,mathrsfs,bm}

\usepackage{graphicx}

\usepackage{xcolor}

\usepackage{mathtools}

\usepackage{microtype}

\usepackage{mleftright}

\usepackage{xspace}

\usepackage[linkcolor=black,colorlinks=true,citecolor=black,filecolor=black]{hyperref}

\usepackage[ruled,vlined,linesnumbered]{algorithm2e}

\usepackage{cleveref}

\makeatletter
\AddToHook{cmd/appendix/before}{\def\cref@section@alias{appendix}}
\makeatother

\usepackage{enumitem}

\usepackage{subcaption}

\usepackage[sort]{cite}

%% file: preamble/macros.tex
\newcommand{\N}{\ensuremath{\mathbb{N}}}

\newcommand{\R}{\ensuremath{\mathbb{R}}}

\DeclarePairedDelimiter{\Paren}{(}{)}
\DeclarePairedDelimiter{\Bracket}{[}{]}
\DeclarePairedDelimiter{\Brace}{\{}{\}}

\NewDocumentCommand{\Oh}{som}{%
  \ensuremath{\mathcal{O}\IfBooleanTF{#1}{%
    \Paren*{#3}
  }{%
    \IfNoValueTF{#2}{%
      \Paren{#3}
    }{%
      \Paren[#2]{#3}
    }%
  }%
}}

\NewDocumentCommand{\Om}{som}{%
\ensuremath{\Omega\IfBooleanTF{#1}{%
    \Paren*{#3}%
  }{%
    \IfNoValueTF{#2}{%
      \Paren{#3}%
    }{%
      \Paren[#2]{#3}%
    }%
  }%
}}

\NewDocumentCommand{\Th}{som}{%
\ensuremath{\Theta\IfBooleanTF{#1}{%
    \Paren*{#3}%
  }{%
    \IfNoValueTF{#2}{%
      \Paren{#3}%
    }{%
      \Paren[#2]{#3}%
    }%
  }%
}}

\NewDocumentCommand{\oh}{som}{%
\ensuremath{o\IfBooleanTF{#1}{%
    \Paren*{#3}%
  }{%
    \IfNoValueTF{#2}{%
      \Paren{#3}%
    }{%
      \Paren[#2]{#3}%
    }%
  }%
}}

\NewDocumentCommand{\om}{som}{%
\ensuremath{\omega\IfBooleanTF{#1}{%
    \Paren*{#3}%
  }{%
    \IfNoValueTF{#2}{%
      \Paren{#3}%
    }{%
      \Paren[#2]{#3}%
    }%
  }%
}}

\NewDocumentCommand{\Prob}{som}{%
\ensuremath{\mathbb{P}
  \IfBooleanTF{#1}{%
    \Bracket*{#3}%
  }{%
    \IfNoValueTF{#2}{%
      \Bracket{#3}%
    }{%
      \Bracket[#2]{#3}%
    }%
  }%
}}

\NewDocumentCommand{\E}{som}{%
\ensuremath{\mathbb{E}
  \IfBooleanTF{#1}{%
    \Bracket*{#3}%
  }{%
    \IfNoValueTF{#2}{%
      \Bracket{#3}%
    }{%
      \Bracket[#2]{#3}%
    }%
  }%
}}

\NewDocumentCommand{\Max}{s o m m}{%
\ensuremath{\max
  \IfBooleanTF{#1}{%
    \Brace*{#3,\, #4}%
  }{%
    \IfNoValueTF{#2}{%
      \Brace{#3,\, #4}%
    }{%
      \Brace[#2]{#3,\, #4}
    }%
  }%
}}

\NewDocumentCommand{\Min}{s o m m}{%
\ensuremath{\min
  \IfBooleanTF{#1}{%
    \Brace*{#3,\, #4}%
  }{%
    \IfNoValueTF{#2}{%
      \Brace{#3,\, #4}%
    }{%
      \Brace[#2]{#3,\, #4}
    }%
  }%
}}

\NewDocumentCommand{\Exp}{som}{%
\ensuremath{\exp
  \IfBooleanTF{#1}{%
    \Paren*{#3}%
  }{%
    \IfNoValueTF{#2}{%
      \Paren{#3}%
    }{%
      \Paren[#2]{#3}%
    }%
  }%
}}

\NewDocumentCommand{\Log}{som}{%
\ensuremath{\log
  \IfBooleanTF{#1}{%
    \Paren*{#3}%
  }{%
    \IfNoValueTF{#2}{%
      \Paren{#3}%
    }{%
      \Paren[#2]{#3}%
    }%
  }%
}}

\NewDocumentCommand{\Var}{som}{%
\ensuremath{\mathrm{Var}
  \IfBooleanTF{#1}{%
    \Bracket*{#3}%
  }{%
    \IfNoValueTF{#2}{%
      \Bracket{#3}%
    }{%
      \Bracket[#2]{#3}%
    }%
  }%
}}

\NewDocumentCommand{\SetBuilder}{s o m m}{%
\ensuremath{\IfBooleanTF{#1}{%
    \Brace*{\,#3 : #4\,}%
  }{%
    \IfNoValueTF{#2}{%
      \Brace{\,#3 : #4\,}%
    }{%
      \Brace[#2]{\,#3 : #4\,}%
    }%
  }%
}}

\newcommand{\binval}{\textsc{BinVal}\xspace}

\renewcommand{\phi}{\ensuremath{\varphi}}

\newcommand{\mpoEA}{\ensuremath{(\mu+1)\text{-EA}}\xspace}



%% file: chapters/00-introduction.tex
\section{Introduction}
\label{sec:introduction}

The runtime analysis of evolutionary algorithms (EAs) studies the expected number of fitness evaluations until an optimal solution is first produced. Over the past two decades, this approach has yielded precise bounds for many algorithm--benchmark combinations and has clarified how design parameters such as the mutation rate or the population size affect performance; see the surveys~\cite{Lengler2020,Sudholt2020} for an overview.

A classical result is that the $(1+1)$~EA with mutation rate~$1/n$ optimizes any linear function $f(x) = \sum_{i \in [n]} w_i x_i$ with positive weights~$w_i$ in expected time $O(n \log n)$~\cite{DrosteJansenWegener2002}. Witt~\cite{Witt2013} later tightened this to $en \ln n + O(n)$. Subsequent work provided alternative proofs and extensions via multiplicative drift, adaptive drift, and related techniques~\cite{DoerrJohannsenWinzen2012,DoerrGoldberg2013,LenglerSteger2018}. %
While single-individual algorithms on linear functions are thus well understood, practical EAs typically maintain \emph{populations} of candidate solutions.

The $(\mu+1)$~EA, which keeps a population of $\mu$ individuals and in each generation creates one offspring by mutating a uniformly chosen parent, is a prototypical population-based elitist algorithm. Understanding the impact of the population size~$\mu$ on the optimization time --- the study of \emph{population dynamics} --- remains one of the most challenging aspects of EA theory~\cite{KrejcaNeumannWitt2025,Sudholt2020}. %
The first rigorous runtime analysis of the $(\mu+1)$~EA was carried out by Witt~\cite{Witt2006}, who showed that the expected runtime on \textsc{OneMax} is $O(\mu n + n\log n)$, a bound later confirmed to be tight~\cite{AntipovDoerr2021}. These results show that for \textsc{OneMax}, the population adds an overhead that is at most linear in~$\mu$. %
However, the effect of population size can be more dramatic in general: Witt~\cite{Witt2008} demonstrated that there exist functions where increasing~$\mu$ by a constant factor reduces the runtime from exponential to polynomial, while conversely, Lengler and Zou~\cite{LenglerZou2021} proved that larger populations can cause superpolynomial slowdowns on certain monotone functions.

The \textsc{BinVal} function, defined as $\textsc{BinVal}(x) = \sum_{i \in [n]} 2^{n-i} x_i$, assigns exponentially decreasing weights to the bit positions. Because an offspring is accepted if and only if it improves the most significant bit at which it differs from the current worst individual, improvements propagate from more significant to less significant positions. Especially with high mutation rates, fixing a significant bit can cause many detrimental mutations of less significant bits. In a population setting, even if all individuals have set a certain bit correctly, this fixation can be destroyed when a more significant bit is improved. Since this effect is most extreme for \textsc{BinVal} among all linear functions, it is a particularly interesting benchmark for studying population effects. While all linear functions have the same asymptotic runtime $O(n \log n)$ for the $(1+1)$~EA, this is no longer the case for population-based algorithms: Doerr and K\"unnemann~\cite{DoerrKuennemann2015} showed that for the $(1+\lambda)$~EA, different linear functions can have different asymptotic runtimes, with \textsc{BinVal} being an asymptotic worst case. 
The \textsc{BinVal} function has also received attention in the dynamic setting~\cite{LenglerRiedi2022,LenglerMeier2020} and in anytime and fixed-target analyses~\cite{KoetzingSander2026}, further establishing it as a key benchmark for population effects on weighted linear functions.

For the $(\mu+1)$~EA on \textsc{BinVal}, generic approaches such as fitness levels and drift-based analyses lead to bounds with an additive $\Theta(n^2)$ term, failing to recover the $O(n\log n)$ scaling even for constant~$\mu$; see the discussion in~\cite{KrejcaNeumannWitt2025}. This was resolved recently by Krejca, Neumann, and Witt~\cite{KrejcaNeumannWitt2025}, who introduced a block-based analysis of population diversity within short contiguous blocks of bits. Their approach yields a runtime bound of $O(\mu^5 n \log(n/\mu^4))$ for standard bit mutation, matching $O(n\log n)$ for constant~$\mu$ and eliminating the quadratic additive term. However, the fifth-degree polynomial dependence on~$\mu$ appears to be an artifact of the proof technique --- the experimental results in~\cite{KrejcaNeumannWitt2025} suggest that the theoretical $\mu^5$ factor is too pessimistic. %
For a more comprehensive account of the related literature, we refer the reader to~\cite{KrejcaNeumannWitt2025} and the references therein, as well as the surveys~\cite{Lengler2020,Sudholt2020}.

\subsection{Our Contribution}

In this work, we substantially improve the upper bound on the expected runtime of the $(\mu+1)$~EA on \textsc{BinVal}. Recall that the $(\mu+1)$~EA maintains a population of~$\mu$ individuals; in each iteration it selects a parent uniformly at random, creates an offspring by mutation, and retains the best~$\mu$ individuals (see Sections~\ref{sec:preliminaries} and~\ref{sec:standardmutation} for the precise definitions). We show runtime results for a generic class of mutation operators, which we call \emph{conservative operators}. Those include in particular \emph{0/1-bit mutation}, which flips exactly one uniformly chosen bit with probability~$1-p$ and leaves the string unchanged with probability~$p$ for a constant $p \in (0,1)$; and \emph{standard bit mutation}, which flips each bit independently with probability~$\chi/n$ for any $\chi >0$. Then our main result is the following.

\begin{theorem}\label{thm:main}
Consider the $(\mu+1)$~EA on the \textnormal{\textsc{BinVal}} function of dimension~$n$, with population size $\mu \geq 2$. Under any conservative mutation operator or specifically either 0/1-bit mutation with constant cloning probability~$p \in (0,1)$, or standard bit mutation with rate~$\chi/n$ for constant $\chi > 0$, the expected runtime is
\[
  O\!\left(\mu \log \mu \cdot n \log n\right).
\]
\end{theorem}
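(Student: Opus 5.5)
The plan is to track the population through a potential that measures how far the \emph{worst} individual is from the optimum in a lexicographic sense, and to show that this potential decreases at a rate losing only a factor $O(\mu\log\mu)$ compared to the $(1+1)$~EA. Concretely, I would define, for each bit position $i$, the event that \emph{all} $\mu$ individuals agree with the optimum on the prefix $[1,i]$, and let $L_t$ be the length of the longest such common optimal prefix. I would also use the number $Z_t$ of zero bits in the worst individual, or more robustly in a fixed weighted combination over the population. By elitism and the \binval ordering, once a prefix is fixed in all individuals it can never be lost, since any offspring that flips a bit inside the prefix ranks below the current worst and is rejected. Thus $L_t$ is monotone. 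The remaining question is how fast the suffix becomes clean.

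The first step, done under the abstract \emph{conservative operator} assumption, is a takeover lemma. Suppose some individual has improved the most significant bit at which the population is not unanimous. Then the expected time until the whole population agrees on that bit is $O(\mu\log\mu)$ iterations. The argument is the standard coupon-collector/takeover calculation: an individual with the better bit is selected with probability $k/\mu$. A conservative operator clones it, or changes only less significant bits, with constant probability, and the resulting offspring beats every individual lacking the bit. Hence $k$ grows to $k+1$ at rate $\Omega(k/\mu)$, and summing over $k$ gives $O(\mu\log\mu)$.

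The second step combines this with a multiplicative drift argument on the number of incorrect bits $Z$ of a designated ``leading'' individual, namely the worst one. Conditional on the population being unanimous on the relevant prefix, an improvement of the most significant zero happens with probability $\Omega(Z/(\mu n))$ per step. Detrimental flips of less significant bits are compensated in expectation because a conservative operator flips $O(1)$ bits in expectation. This yields a drift for $Z$ of order $\Omega(Z/(\mu n\log\mu))$ once we charge the takeover phases. Multiplicative drift then gives $O(\mu\log\mu\cdot n\log n)$. The condition $\mu=o(n/\log n)$ enters when bounding the probability that a takeover phase is disrupted by a new improvement of a more significant bit. With this bound, the disruptions only cost a constant factor.

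The final step verifies that 0/1-bit mutation and standard bit mutation with constant $\chi$ are conservative operators, that is, they satisfy the constant cloning probability and the bounded expected number of flips with the needed tail bounds. Theorem~\ref{thm:main} then follows from the generic result.

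The main obstacle will be step two. Improvements in a non-worst individual can reshuffle which individual is worst, and the takeover can be repeatedly ``reset''. I would first try to handle this with a potential summing $Z$ over all $\mu$ individuals, weighted so that a takeover never increases it in expectation. If that fails, I would instead use a fitness-level style argument on blocks of $\Theta(\log n)$ bits.
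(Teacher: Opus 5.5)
Your step two has a genuine gap, and it is exactly where the paper's proof does its real work. The claim that detrimental flips of less significant bits are ``compensated in expectation because a conservative operator flips $O(1)$ bits in expectation'' is false.

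Take first a single individual under standard bit mutation with rate $\chi/n$, in a state whose $Z$ zeros are the most significant bits. An improving step gains one bit but destroys $\chi(n-Z)/n$ ones in expectation. So the zero count has drift only $O(Z^2/n^2)$ for $\chi=1$, and negative drift for constant $\chi>1$ and small $Z$. This is why linear functions need specially weighted potentials even for the $(1+1)$~EA.

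In the population it is much worse. While a new one at position $q$ takes over, \emph{every} offspring of a carrier that leaves $[1,q]$ intact beats all non-carriers. It is accepted whatever it destroyed further right, and that damage is inherited. About $\mu$ such offspring are accepted per takeover. So if you track correct bits on a set $S$ of positions, each improved bit costs expected damage of order $\mu|S|/n$; the paper bounds it by $O(\mu\log\mu\cdot|S|/n)$. For a potential over the whole string, $|S|=\Theta(n)$, and this swamps the single bit gained. For instance, the global number of non-fixated positions has negative drift once $\mu$ is large. Nothing in your argument supports the drift $\Omega(Z/(\mu n\log\mu))$, and it is unclear how reweighting $Z$ over the population could absorb losses of this size.

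The missing idea is to localize. The paper partitions the string into blocks of length $n'=c\,n\,p_{\text{clone}}/(c_2\mu\ln\mu)$ and only tracks the non-fixated positions of the current block. Then the expected damage inside the block during one spread phase is $c_2\frac{n'}{n}\cdot O(\mu\log\mu/p_{\text{clone}})=O(c)$. Also, a fresh $0\to1$ flip in the block takes over before any interference or destruction with probability at least $(e\mu)^{-C}$, where $C=c/\ln\mu$, so this probability is a constant. For small $c$ this gives additive drift $\Omega(1)$ per macro step for the number of non-fixated bits in the block. Each macro step with $k$ open bits costs $O(n/k+\mu\log\mu)$. Hence a block costs $O(n\log n'+n'\mu\log\mu)$, and the total is $O(n^2\log n'/n'+n\mu\log\mu)=O(\mu\log\mu\cdot n\log n)$.

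The blocks must be as \emph{long} as the damage constraint allows. The flip phases are charged at least $\sum_{k\le n'}n/k\ge n$ per block, i.e.\ at least $n^2/n'$ overall. Your fallback with blocks of $\Theta(\log n)$ bits therefore yields $\Omega(n^2/\log n)$, far above the target for small $\mu$.

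Two smaller points. The hypothesis $\mu=o(n/\log n)$ is used only to ensure $n'\ge1$, not to control interference. And since the theorem puts no restriction on $\mu$, you also need the easy case $\mu=\Omega(n/\log n)$. The paper handles it with blocks of length one, which gives $O(n\mu\log\mu+n^2)=O(\mu\log\mu\cdot n)$.
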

More precisely, the expected runtime is $O(\mu \log \mu \cdot n \max\{1,\log(n/(\mu \log \mu))\})$, which is a stronger bound when $\mu$ grows with~$n$. This reduces the dependence on the population size from $\mu^5$ in the bound of~\cite{KrejcaNeumannWitt2025} to $\mu \log \mu$, while also substantially relaxing the restriction on the range of admissible~$\mu$: the result of~\cite{KrejcaNeumannWitt2025} requires $\mu = o((n/\log n)^{1/5})$, while our result improves the previous best bounds for any $\mu = o(n/\log n)$. Our upper bound also closes the gap to the known $\Omega(\mu n + n \log n)$ lower bound~\cite{Witt2006} up to logarithmic factors. For constant~$\mu$, the bound recovers the $O(n \log n)$ runtime of the $(1+1)$~EA on linear functions~\cite{Witt2013}. Finally, while~\cite{KrejcaNeumannWitt2025} only considered mutation rate $1/n$ for standard bit mutation, our result extends to any mutation rate $\chi/n$ with $\chi = O(\sqrt n)\cap\Omega(1)$ at the cost of a factor $e^{\chi}$.

Our improvement stems from a more refined analysis of the \emph{interference probability} during the spreading of a beneficial mutation through the population. In the framework of~\cite{KrejcaNeumannWitt2025}, the bit string is partitioned into contiguous blocks, and within each block, mutations spread from a single copy to all~$\mu$ individuals. During this spreading process, an interfering mutation at another position in the same block can disrupt this fixation. While~\cite{KrejcaNeumannWitt2025} bounds this interference probability uniformly by the block size, we observe that it depends on the number~$k$ of remaining unfixed zeros in the current block. When few zeros remain, interference is less likely and mutations spread more easily. This $k$-dependent analysis, combined with Markov chain arguments for modeling population fixation and additive drift analysis for bounding macro-step progress, yields the improved bound for mutation rate at most $1/n$. 

In order to extend the result to non-constant mutation rates, we show that every run of the $(\mu+1)$ EA with mutation rate $\chi/n$ on \binval with $\chi >1$ can be decomposed into a sequence of $\lceil\chi\rceil$ optimization processes of \binval with smaller dimension and smaller mutation parameter $\chi' \le 1$. Although a rather basic observation, we believe that this decomposition is of general interest, as it generalizes far beyond the \mpoEA. 

Finally, we present experimental results for the \mpoEA on \binval for both standard bit mutation and 0/1 mutation to provide empirical validation and to investigate the tightness of the analysis.


%% file: chapters/01-preliminaries.tex
\section{Preliminaries}
\label{sec:preliminaries}

\subsubsection{Notation and Problem Setting.}

The \mpoEA maintains a population $P_t$ of size $\mu$ at every time step $t$ for some $\mu \in \N$. The objective is to optimise a fitness function $f \colon \{0, 1\}^n \to \R$. In each generation, a uniformly random individual $x$ is chosen from the population, and an offspring $y$ is produced from $x$ through mutation. The offspring $y$ is then added to the population, and the individual with the worst fitness (possibly $y$ itself) is discarded. In this paper, we study the \mpoEA for $\mu \geq 2$ on the \binval fitness function, defined for $x \in \{0,1\}^n$ as
\begin{equation*}
    \binval(x) = \sum\nolimits_{i=1}^n 2^{n-i}\cdot x_i.
\end{equation*}
Because \binval is injective, every distinct bit string yields a unique fitness value, ensuring that the removal of the worst individual is always unambiguous.

We analyse a class of \emph{conservative mutation operators} that is defined in Section~\ref{sec:generalisation}. It includes in particular \textit{$0/1$ mutation}, in which the offspring $y$ is an exact clone of $x$ with probability $p \in (0, 1)$, and otherwise, a single bit of $x$ is chosen uniformly at random and flipped to produce $y$. And it includes \textit{standard bit mutation}, in which every bit of $x$ is flipped independently with probability $\chi/n$ to produce $y$, for some \emph{mutation parameter} $0<\chi \le 1$, which we later extend to $\chi >1$ as well.

In our analysis, we partition each bit string into $\nu := n/n'$ blocks $B_i$ of length $n'$, for $i \in [\nu]$. We denote by $B_i(x)$ the substring corresponding to the $i$-th block of bits in $x$, specifically $B_i(x) := x_{(i-1)n'+1}\dots x_{i\cdot n'}$. We call a position $j$ \textit{fixated} at time $t$ if $\forall x \in P_t \colon x_j = 1$. Note that once a leading segment is fixated for \binval, it remains fixated forever. We are interested in the expected time it takes for all positions in the current block to become fixated, conditional on finishing the previous block. Formally, we define $S_0 = 0$ and let $S_i$ be the first time step where all bits in block $B_i$ are fixated across the entire population:
\begin{equation*}
    S_i := \inf \{ t \in \N \mid t \geq S_{i-1} \wedge \forall x \in P_t \colon B_i(x)=1^{n'} \}.
\end{equation*}
We are interested in $\mathbb{E}[T_i]$, where $T_i := S_i - S_{i-1}$. For convenience, at time step $t$, we denote by $B$ the \textit{current block} $B_i$, where $i \in [\nu]$ such that $S_{i-1} \leq t < S_i$. The total time $T$ for the process to produce an optimal individual is given by $T = \sum_{i=1}^{\nu} T_i$, and by linearity of expectation, $\mathbb{E}[T] = \sum_{i=1}^\nu \mathbb{E}[T_i]$.

At time $t$, we say the process is in a \textit{flip phase} if $\forall x,y \in P_t \colon B(x) = B(y)$; otherwise, we say it is in a \textit{spread phase}. Note that the process may only switch from a flip phase to a spread phase when there is at least one $0 \rightsquigarrow 1$ mutation in the current block $B$. Let $q$ denote the position of the most significant non-fixated $1$ in the population. We refer to position $q$ as the \textit{current mutation} with respect to a spread phase. During a spread phase, two failure events may occur:
\begin{itemize}
    \item \textit{Interference:} A $0 \rightsquigarrow 1$ mutation is accepted in an unfixated position $b \in B$ which is more significant than $q$, that is, $b < q$.
    \item \textit{Destruction:} A previously fixated position becomes unfixated (flip $1 \rightsquigarrow 0$).
\end{itemize}

Our convention is that upon both interference and destruction a new spread phase starts. If, during a spread phase, position $q$ becomes fixated, the process either switches back to a flip phase or, if there remains another non-fixated $1$ in the population, a new spread phase starts. We refer to a \textit{macro step} as the period from the end of a spread phase until the end of the subsequent spread phase. Hence, a macro step may consist of a flip phase and a spread phase, or possibly only of a spread phase. Intuitively, a macro step represents one continuous attempt to fixate a bit in the current block, where the attempt may fail due to interference or destruction.

\begin{note}
    In the proof, we will later choose a block length of $n' = cn/(\mu\ln \mu) $, where $c$ is a constant depending on the mutation operator. We will see that we may restrict to the case $\mu = o(n/\log n)$ in our proofs, which implies $n' \ge 1$ for sufficiently large $n$. For the sake of exposition, we will assume that $n'$ is an integer. Otherwise, we may round down to the next integer. 
    Similarly, it may be the case that $n'$ does not divide $n$, forcing the final block to be shorter, and for the sake of exposition we assume that $n'$ divides~$n$.
\end{note}

\subsubsection{Tools.}
In our analysis we will employ the following additive drift theorem:~\cite{additive-drift}
\begin{theorem}
    Let $(X_t)_{t \geq 0}$ be a sequence of non-negative random variables with a finite state space $\mathcal{S} \subseteq \mathbb R_{\geq0}$ such that $0 \in \mathcal{S}$. Let $T := \inf\{t \geq 0\mid X_t = 0\}$. If there exists $\delta > 0$ such that for all $s \in \mathcal S\setminus \{0\}$ and for all $t \geq 0$
    \[
    \Delta_t(s) := \mathbb E[X_t - X_{t+1} \mid X_t = s] \geq \delta,
    \]
    then
    $\mathbb E[T] \leq \mathbb E[X_0]/\delta$.
    \label{thm:additive-drift}
\end{theorem}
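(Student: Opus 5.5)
The proof I would give is the standard supermartingale argument, combined with optional stopping in its elementary truncated form. One point of interpretation comes first. As is customary, and as the theorem will be applied later, I read the hypothesis as holding conditionally on the history, that is,
\[
\E{X_t - X_{t+1} \mid \mathcal F_t} \ge \delta \quad\text{on the event } \{T>t\},
\]
where $\mathcal F_t$ is the natural filtration of $(X_s)_{s\le t}$. Conditioning only on $X_t=s$ does not suffice in general. The event $\{T>t\}$ also depends on $X_0,\dots,X_{t-1}$, and the process might leave $0$ after visiting it. So if one insists on the literal version, I would additionally assume that $0$ is absorbing and that $(X_t)$ is Markovian. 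Under those assumptions $\{T>t\}=\{X_t\neq 0\}$, and the two readings agree.

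With this in place, I would define the stopped and compensated process
\[
Y_t := X_{t\wedge T} + \delta\,(t\wedge T).
\]
The first step is to check that $(Y_t)$ is a supermartingale. On $\{T\le t\}$ we have $Y_{t+1}=Y_t$. On $\{T>t\}$ we have $Y_{t+1}-Y_t = X_{t+1}-X_t+\delta$, whose conditional expectation is at most $0$ by the drift hypothesis. Both events are $\mathcal F_t$-measurable, so $\E{Y_{t+1}\mid\mathcal F_t}\le Y_t$. Integrability is automatic: the state space $\mathcal S$ is finite, so $X_t\le \max\mathcal S$, and $t\wedge T\le t$.

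The second step is to iterate. This gives $\E{Y_t}\le \E{Y_0}=\E{X_0}$ for every fixed $t$. Since $X_{t\wedge T}\ge 0$, it follows that
\[
\delta\,\E{t\wedge T}\le \E{X_0}-\E{X_{t\wedge T}}\le \E{X_0}.
\]

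The third step is to let $t\to\infty$. The variables $t\wedge T$ increase monotonically to $T$, where $T=\infty$ is allowed a priori. Monotone convergence then yields $\E{T}\le \E{X_0}/\delta$. In particular, $T<\infty$ almost surely, so no separate argument that the process terminates is needed.

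The only delicate step is the measure-theoretic one noted at the start: making sure the drift condition is available on the event $\{T>t\}$ given the past, and not merely given $X_t=s$. Once the hypothesis is stated in filtration form, or $0$ is absorbing, the remaining steps are routine. Using the truncation $t\wedge T$ together with monotone convergence avoids any appeal to a general optional stopping theorem.
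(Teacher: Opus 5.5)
The paper gives no proof of this statement; it is quoted as a standard tool from \cite{additive-drift}, so there is no in-paper argument to compare with. Your proof is the standard one, and it is correct under the reading you adopt. The process $Y_t=X_{t\wedge T}+\delta\,(t\wedge T)$ is a bounded supermartingale, so $\delta\,\E{t\wedge T}\le\E{X_0}$ for every $t$. Monotone convergence then gives the bound, together with $T<\infty$ almost surely.

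Your caveat about the literal hypothesis is justified, and here is a concrete witness. Take $\mathcal S=\{0,1\}$ and three equally likely sample paths: one constantly $1$, one equal to $1$ exactly at even times, and one equal to $1$ exactly at odd times. For every $t$, the event $\{X_t=1\}$ consists of the constant path (drift $0$) and one alternating path (drift $1$). Hence $\E{X_t-X_{t+1}\mid X_t=1}=1/2$, yet $\E{X_0}=2/3$ and $\E{T}=\infty$.

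Your remedy, however, assumes more than necessary: either condition alone suffices.
\begin{itemize}
\item If $(X_t)$ is Markov, then $\{T>t\}\subseteq\{X_t\neq 0\}$ already gives your filtration hypothesis, without absorption.
\item If $0$ is absorbing, you can drop the Markov property by replacing the supermartingale with unconditional expectations. The event $\{T>t\}=\{X_t\neq0\}$ is a finite disjoint union of the events $\{X_t=s\}$, so summing the literal hypothesis gives $\E{(X_{t+1}-X_t)\mathbf 1_{\{T>t\}}}\le-\delta\Pr[T>t]$. Since $X_{(t+1)\wedge T}-X_{t\wedge T}=(X_{t+1}-X_t)\mathbf 1_{\{T>t\}}$, telescoping yields $\E{X_{t\wedge T}}\le\E{X_0}-\delta\,\E{t\wedge T}$, the same inequality as before.
\end{itemize}

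Finally, the filtration form is what the paper actually needs. In Lemma~\ref{lem:macro_drift}, the count $X_m$ is not Markov and can leave its target value again. The drift bound there holds uniformly over the population state at the start of a macro step, hence conditionally on the whole history.
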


We will also use the following basic observation.
\begin{lemma}\label{lem:mulogmu}
Assume $\mu= o(n/\log n)$. Then $\mu \log \mu = o(n)$.
\end{lemma}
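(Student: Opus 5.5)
We need to show that $\mu = o(n/\log n)$ implies $\mu\log\mu = o(n)$. The idea is that $\log\mu \le \log n$ as soon as $\mu \le n$, and then $\mu\log\mu$ is at most $\mu\log n$, which is $o(n)$ by hypothesis.

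First, the hypothesis $\mu = o(n/\log n)$ means that for every $\varepsilon>0$ we have $\mu \le \varepsilon n/\log n$ for all sufficiently large $n$. Since $\log n \ge 1$ for large $n$, this gives $\mu \le n$ eventually, and hence $\log\mu \le \log n$, because $\log$ is monotone and $\mu\ge 1$.

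Multiplying the two bounds, for all sufficiently large $n$,
\[
\mu\log\mu \;\le\; \mu\log n \;\le\; \varepsilon n .
\]
As $\varepsilon>0$ was arbitrary, $\mu\log\mu = o(n)$.

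If $\mu$ is bounded (for example $\mu = 2$), then $\mu\log\mu = O(1) = o(n)$ trivially, and this case is already covered by the argument above. The only care needed is to get the direction of the logarithm comparison right, namely $\log\mu \le \log n$, which needs only $\mu \le n$. There is no real obstacle.
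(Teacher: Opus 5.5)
Your proof is correct and follows the paper's argument: eventually $\mu \le n$, so $\log\mu \le \log n$ and hence $\mu\log\mu \le \mu\log n = o(n)$. The only difference is that you write out the $\varepsilon$-quantifiers that the paper leaves implicit.
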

\begin{proof}
    For sufficiently large $n$ we have $\mu \le n$ and thus $\log \mu \le \log n$. Hence, $\mu \log \mu = o(n/\log n \cdot \log n) = o(n)$.~\qed
\end{proof}

%% file: chapters/02-generalisation.tex
\section{Proofs}

\subsection{Conservative Mutation Operators}\label{sec:generalisation}


In this section we will introduce the class of conservative mutation operators. Then we will show the main theorem, the runtime bound for the \mpoEA with arbitrary conservative mutation operator.

\begin{definition}[Conservative Mutation Operator]
\label{def:conservative_mutation}
A mutation operator is said to be \emph{conservative} if there exist constants $c_1, c_2 > 0$ such that for all disjoint sets of bits $S,S' \subseteq \{1, \dots, n\}$, the following conditions are satisfied:
\begin{itemize}
    \item The probability of producing an exact clone, $p_{\text{clone}}$, is $\Omega(1)$.
    
    \item The conditional probability $p_{\text{safe}}$ that in a single iteration only bits in $S$ are flipped given that some bit in $S$ is flipped, is $\Omega(1)$.
    
    \item The probability $p_{\text{flip,safe}}$ of flipping at least one bit in $S$ and no bit in $[n] \setminus S$ in a single iteration satisfies
    $p_{\text{flip,safe}} \ge c_1|S|/n$.

    
    \item 
    The expected number of bits in $S$ that are flipped in a single iteration is at most $\mathbb{E}[\text{destroy}] \le c_2 |S|/n$. The bound remains true if we condition on flipping at least one bit in $S'$ in that iteration. 
\end{itemize}
\end{definition}

Given these conditions, we can systematically bound the expected optimization time. We first analyze the probability of a mutation successfully taking over the population.

\begin{lemma}[Spread Probability]
    \label{lem:spread_prob}
    During a spread phase, the probability that the current mutation successfully fixates across the population before a failure event occurs is at least $(e\mu)^{-C}$, where $C = c_2n'\mu/(np_{\text{clone}})$.
\end{lemma}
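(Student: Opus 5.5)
We will lower-bound the success probability of a spread phase by a product over the sizes $j=1,\dots,\mu-1$ of the set of individuals carrying the current mutation $q$. We only use a pessimistic, fully explicit route to fixation: from a state with $j$ carriers, the next \emph{relevant} event is either (a) a clone of a carrier is produced and replaces a non-carrier, or (b) a failure event occurs. Every offspring that keeps all of the current block's fixated bits and bit $q$ is at least as good as every non-carrier, because a non-carrier has $0$ at $q$ and agrees with the carrier on all more significant positions unless interference happened. Hence an exact clone of a carrier is always accepted and removes a non-carrier, provided a non-carrier is still the worst individual. A clone of a carrier is generated with probability at least $\frac{j}{\mu}p_{\text{clone}}$ per iteration. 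We ignore all other progress, and treat any other mutation either as neutral or, pessimistically, as a failure if it could be one.

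The key step is to bound the per-iteration probability of a failure while there are $j$ carriers. A failure (interference or destruction) requires flipping at least one bit among the relevant positions, which are the already fixated bits of the current block plus the unfixated positions of $B$ more significant than $q$. All of them lie in $B$, so there are at most $n'$ of them. By the fourth condition of Definition~\ref{def:conservative_mutation} and Markov's inequality, the probability of flipping any bit in a set $S$ with $|S|\le n'$ is at most $c_2 n'/n$. Thus the probability that a failure precedes the next clone step is at most
\[
\frac{c_2n'/n}{c_2n'/n+\frac{j}{\mu}p_{\text{clone}}}\le \frac{c_2n'\mu}{j\,n\,p_{\text{clone}}}=\frac{C}{j}.
\]
Note that mutations of bits outside $B$ may also destroy fixated bits in earlier blocks. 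Such events can only be \emph{destruction}, and they are ruled out for more significant blocks because fixated leading segments stay fixated. For less significant bits the mutation is irrelevant. So we only need to count bits in $B$, and one should double-check this point. Consequently the probability of surviving the step from $j$ to $j+1$ carriers is at least $\max\{0,1-C/j\}$. This bound is useless when $j\le C$, so for those $j$ we instead use $\frac{j p_{\text{clone}}/\mu}{j p_{\text{clone}}/\mu + c_2n'/n}=\frac{j}{j+C}$, which is the cleaner exact form. We therefore use $1-\frac{C}{j+C}=\frac{j}{j+C}$ for every $j$.

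Multiplying over $j=1,\dots,\mu-1$ gives
\[
\prod_{j=1}^{\mu-1}\frac{j}{j+C}=\prod_{j=1}^{\mu-1}\Bigl(1+\frac{C}{j}\Bigr)^{-1}\ge \exp\Bigl(-C\sum_{j=1}^{\mu-1}\frac1j\Bigr)\ge \exp\bigl(-C(1+\ln\mu)\bigr)=(e\mu)^{-C},
\]
using $1+x\le e^x$ and $H_{\mu-1}\le 1+\ln\mu$.

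The main obstacle is justifying the reduction to this birth-or-failure chain. Non-clone offspring of carriers, or offspring of non-carriers, may also be accepted and could remove a carrier, so the carrier count is not monotone in general. To handle this, we must argue that any accepted offspring that is not a failure event either has $1$ at $q$ (so it is a carrier and cannot push out a carrier unless it duplicates the worst), or is worse than all carriers and hence removes only a non-carrier. In both cases the count does not decrease. A non-carrier offspring flipping $q$ to $1$ also only helps. Once this monotonicity is established, ignoring the extra progress is safe. The hitting-order comparison then only needs the per-step ratio of the clone rate to the failure rate, and this ratio is exactly what the conditions of Definition~\ref{def:conservative_mutation} supply.
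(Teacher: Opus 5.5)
Your proposal is correct and follows essentially the same route as the paper. Both arguments use a birth-or-failure chain on the number $j$ of carriers, with progress probability at least $p_{\text{clone}}\, j/\mu$ and failure probability at most $c_2 n'/n$ (Markov's inequality applied to the block), giving $\prod_{j=1}^{\mu-1} \frac{j}{j+C} \ge e^{-C H_{\mu-1}} \ge (e\mu)^{-C}$. You also explicitly check that non-failure events never decrease the carrier count, so extra progress can be safely ignored; the paper leaves this implicit in its chain model.
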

\begin{proof}
    We model the spread of the current mutation as a Markov Chain with transient states $i \in \{1,2,\dots, \mu -1\}$, a target absorbing state $\mu$ (successful fixation), and a failure absorbing state $F$. State $i$ represents $i$ individuals currently holding the mutation. The transition probabilities are bounded by:
    \begin{itemize}
        \item \textbf{Progress:} $p_{i,i+1} \geq p_{\text{clone}} \cdot i/\mu$ (the probability of cloning an individual with the current mutation).
        \item \textbf{Failure:} $p_{i,F} \leq c_2 n'/n$ (since the probability of flipping a bit in the current block is upper bounded by $\mathbb{E}[\text{destroy}]$ by Markov's inequality).
        \item \textbf{Staying Put:} $p_{i,i} = 1 - p_{i,i+1} - p_{i,F}$.
    \end{itemize}
    Let $s_i$ be the probability of reaching state $\mu$ from state $i$, with $s_\mu = 1$ and $s_F = 0$. By the law of total probability,
    \begin{equation*}
        s_i = p_{i,i+1}s_{i+1} + (1- p_{i,i+1}-p_{i,F})s_i.
    \end{equation*}
    Rearranging yields $s_i = \frac{p_{i,i+1}}{p_{i,i+1}+p_{i,F}}s_{i+1}$. Clearly, this relation implies that $s_1$ is smallest of the probabilities. Thus, it suffices to bound $s_1$. We first expand $s_1$ via
    \begin{equation*}
        s_1 = \prod^{\mu-1}_{i=1}\frac{p_{i,i+1}}{p_{i,i+1}+p_{i,F}} = \prod^{\mu-1}_{i=1}\frac{1}{1 + \frac{p_{i,F}}{p_{i,i+1}}}.
    \end{equation*}
    Using the bound $\frac{p_{i,F}}{p_{i,i+1}} \leq \frac{c_2n'\mu}{np_{\text{clone}}i} = \frac{C}{i}$ and the standard inequality $\frac{1}{1+x} \geq e^{-x}$,
    \begin{equation*}
        s_1 \geq \prod^{\mu-1}_{i=1}e^{-C/i} = e^{-C\sum^{\mu-1}_{i=1}1/i} = e^{-CH_{\mu-1}}.
    \end{equation*}
    As $H_{\mu-1} \leq \ln \mu + 1$, the probability of fixating the current mutation is at least
    \begin{equation*}
        \mathbb P[\text{fixate}] \geq e^{-C(\ln \mu + 1)} = e^{-C}\cdot \mu^{-C} = (e\mu)^{-C}.\tag*{\qed}
    \end{equation*}
\end{proof}

\begin{lemma}[Phase Durations]
    \label{lem:phase_durations}
    Let $k$ be the number of non-fixated bits in the current block. The expected duration of a flip phase is $\mathbb{E}[T^{k}_{\text{flip}}] \leq n/(c_1k)$, and the expected duration of a spread phase is $\mathbb{E}[T_{\text{spread}}] \leq \mu (\ln \mu + 1)/p_{\text{clone}}$.
\end{lemma}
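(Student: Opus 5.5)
The two bounds are handled separately, and each follows from a simple waiting-time argument using the conservative-operator properties of Definition~\ref{def:conservative_mutation}.

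\emph{Flip phase.} During a flip phase all individuals agree on the current block $B$, and $B$ contains $k$ non-fixated bits. The set $S$ of these $k$ positions is the same for every individual. Pick any parent. By the third property of Definition~\ref{def:conservative_mutation}, with probability at least $c_1 k/n$ the mutation flips at least one bit of $S$ and nothing outside $S$.

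We claim such an offspring always ends the flip phase. All flipped bits are zeros of $B$ that become ones, and no more significant bit changes. So the offspring is strictly better than its parent on \binval and is not the unique worst individual, hence it is accepted. Since it differs from the others on $B$, the population is no longer uniform on $B$, which is the start of a spread phase.

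The per-iteration success probability is therefore at least $c_1k/n$, uniformly over the history of the phase. Two side cases need a remark:
\begin{itemize}
  \item Mutations elsewhere may change the population outside $B$ without ending the phase. This does not affect the bound, which holds for every parent in every iteration.
  \item A mutation could destroy a fixated bit in an earlier position. That event also ends the flip phase, so it only helps.
\end{itemize}
Since $k$ is constant throughout a flip phase, the duration is dominated by a geometric random variable with success probability $c_1k/n$. Hence $\mathbb{E}[T^k_{\text{flip}}]\le n/(c_1k)$.

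\emph{Spread phase.} I reuse the Markov chain of Lemma~\ref{lem:spread_prob}, in which state $i$ counts the individuals carrying the current mutation. From state $i$ the phase moves to $i+1$ with probability at least $p_{\text{clone}}\,i/\mu$: select one of the $i$ carriers and clone it. The clone is accepted because it is at least as good as its parent, and the worst individual, which is a non-carrier while $i<\mu$ (differing from carriers at the most significant non-fixated position $q$ in their disfavor), is discarded.

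The phase can also end earlier, by interference or destruction, or the count may drop. This is the one point needing care. A carrier is never the worst individual while non-carriers exist, unless a carrier lost some bit. Such a loss is a destruction, or is dominated by the mutation that produced the new offspring, which would itself be an interference event. In every such case the phase ends, so the count $i$ is non-decreasing until the phase terminates.

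Therefore the phase ends no later than the time needed to pass through states $1,\dots,\mu-1$. The expected sojourn in state $i$ is at most $\mu/(p_{\text{clone}} i)$. Summing over the states,
\[
\mathbb{E}[T_{\text{spread}}]\le\frac{\mu}{p_{\text{clone}}}\sum_{i=1}^{\mu-1}\frac1i=\frac{\mu}{p_{\text{clone}}}H_{\mu-1}\le \frac{\mu(\ln\mu+1)}{p_{\text{clone}}}.
\]
To make this rigorous, I would couple the count with a pure-birth chain whose rates are exactly $p_{\text{clone}}i/\mu$, stopped at the first failure event. The main obstacle is checking that no accepted offspring can decrease the count without also triggering a failure; this follows from the lexicographic nature of \binval.
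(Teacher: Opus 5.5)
Your proposal is correct and follows the paper's own route. For the flip phase, you dominate the duration by a geometric variable with success probability $c_1k/n$, taking $S$ to be the $k$ non-fixated bits of $B$; for the spread phase, you sum the expected sojourn times $\mu/(p_{\text{clone}}\, i)$ of the carrier-count chain, with failure events only shortening the phase. Your extra checks make explicit what the paper leaves implicit in its ``worst case where no failure events truncate the phase'' remark: that a safe flip in $S$ is always accepted and breaks uniformity on $B$ (using $\mu\ge 2$), and that the carrier count cannot drop because all individuals agree on positions before $q$.
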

\begin{proof}
    Taking $S$ to be the set of $k$ non-fixated bits in the current block, $\mathbb{E}[T^{k}_{\text{flip}}]$ is stochastically dominated by a geometric distribution with success probability at least $c_1k/n$, yielding $\mathbb{E}[T^{k}_{\text{flip}}] \leq n/(c_1k)$. 
    
    To bound the expected time of a spread phase, we assume the worst-case scenario where no failure events truncate the phase, relying purely on the progress probability $p_{i,i+1}$:
    \begin{equation*}
        \mathbb E[T_{\text{spread}}] \leq \sum^{\mu-1}_{i=1}\frac{1}{p_{i,i+1}} \leq \sum^{\mu-1}_{i=1}\frac{\mu}{p_{\text{clone}}\cdot i} = \frac{\mu}{p_{\text{clone}}}H_{\mu -1} \leq \frac{\mu (\ln \mu + 1)}{p_{\text{clone}}}.\tag*{\qed}
    \end{equation*}
\end{proof}

We now evaluate the process in terms of macro steps. Let $X_m$ denote the number of non-fixated bits in the current block at macro step $m$. To analyze the time it takes to reduce the non-fixated bits from $k$ to $k-1$, we shift $X_m$ to define a zero-state: $X^{k-1}_m = X_m - (k-1)$.

\begin{lemma}[Macro Step Drift]
    \label{lem:macro_drift}
    Assume $\mu=o(n/\log n)$. For a suitable constant $c>0$ and block size $n' := cnp_{\text{clone}}/(c_2\mu \ln \mu)$, the expected drift $\Delta^k_m(s) := \mathbb{E}[X^{k-1}_m - X^{k-1}_{m+1} \mid X^{k-1}_m = s - (k-1)]$ is $\Delta^k_m(s) = \Omega(1)$.
\end{lemma}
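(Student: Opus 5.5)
We need to show that in a macro step starting with $X_m = s \ge k$ non-fixated bits, the expected decrease of $X^{k-1}_m$ is bounded below by a constant. The drift is a positive term from successful fixation minus a negative term from destruction, and we bound the two separately.

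\textbf{Positive part.} A macro step ends with its spread phase, whose current mutation is at a position $q$ in the current block. With probability at least $(e\mu)^{-C}$ by Lemma~\ref{lem:spread_prob}, $q$ fixates before interference or destruction, and then $X$ drops by at least one. With the choice $n' = cnp_{\text{clone}}/(c_2\mu\ln\mu)$ we get
\[
C = c_2 n'\mu/(np_{\text{clone}}) = c/\ln\mu,
\]
so $(e\mu)^{-C} = e^{-c/\ln\mu}\,e^{-c} \ge e^{-2c}$ for $\mu \ge 3$, with the case $\mu=2$ handled by adjusting constants. Lemma~\ref{lem:mulogmu} guarantees $n'\ge 1$. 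The positive contribution to the drift is therefore at least $e^{-2c}$, i.e.\ $\Omega(1)$ for any fixed $c$.

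\textbf{Negative part.} $X$ can increase only through destruction, where previously fixated ones in the block are flipped to zero in an accepted offspring. The number of unfixed bits added is at most the number of such flipped bits. The expected number of flipped bits per iteration inside the block is at most $c_2n'/n$ by the fourth condition of Definition~\ref{def:conservative_mutation}. This bound still holds conditioned on a simultaneous flip elsewhere, which is needed because destruction is only accepted when it comes with an improvement at a more significant position. Summing over the iterations of the macro step, with Wald's identity, bounds the expected increase by
\[
\frac{c_2 n'}{n}\bigl(\mathbb E[T_{\text{flip}}] + \mathbb E[T_{\text{spread}}]\bigr).
\]

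\textbf{The hard step.} The main obstacle is that the flip phase lasts up to $n/(c_1 k)$ steps by Lemma~\ref{lem:phase_durations}. Naively this gives a loss of $c_2 n'/(c_1 k)$, which is not small when $k$ is small. To fix this, I would argue that during a flip phase destruction is impossible on its own. Every individual agrees on the block and has all bits before the block fixated. So a mutation flipping a fixated one in the block to zero is accepted only if the same offspring flips some unfixed zero in the block, at a more significant position, to one. That event ends the flip phase. Hence destruction during a flip phase costs at most the bits destroyed in the single transition step. Using the conditional form of the fourth condition (conditioning on flipping a bit in $S'$, the set of zeros), this is at most $c_2 n'/n = o(1)$ in expectation.

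In the spread phase, the loss is at most
\[
\frac{c_2 n'}{n}\cdot\frac{\mu(\ln\mu+1)}{p_{\text{clone}}} = c\,\frac{\ln\mu+1}{\ln\mu} \le 3c
\]
for $\mu \ge 2$, using Lemma~\ref{lem:phase_durations}.

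\textbf{Conclusion.} The drift is at least $e^{-2c} - 3c - o(1)$. Choosing $c$ small enough, for instance $c = 1/10$, makes this a positive constant, which gives $\Delta^k_m(s) = \Omega(1)$.

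Interference cannot increase $X$, since it only converts a zero to a one, so it does not affect the negative part. Zeros appearing in individuals that are not accepted also do not matter. One further point to check is that a step which fixates $q$ and simultaneously destroys other bits is already covered by the destruction accounting above.
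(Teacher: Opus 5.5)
Your proof is correct and is essentially the paper's argument. It uses progress from Lemma~\ref{lem:spread_prob} with $C=c/\ln\mu$, minus destruction bounded by $\frac{c_2n'}{n}\bigl(\mathbb{E}[T_{\text{spread}}]+1\bigr)$, relying on the paper's observation that in a flip phase destruction can only occur in the final (transition) iteration, and then takes $c$ small. The one deviation is that you charge destruction in the transition step to the negative term instead of multiplying the progress term by $p_{\text{safe}}$ as the paper does. This is equally valid, since $X_m-X_{m+1}\ge \mathbf{1}[q\text{ fixates}]-D$ with $D$ the number of bits destroyed during the macro step, and it shows $p_{\text{safe}}$ is not needed for this lemma. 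One slip: $c_2n'/n=c\,p_{\text{clone}}/(\mu\ln\mu)$ is not $o(1)$ when $\mu$ is constant, but it is at most $c/(2\ln 2)$, so the choice of $c$ absorbs it (the paper keeps it as the explicit term $c\,p_{\text{clone}}/(\mu\ln\mu)$), and your $c=1/10$ still gives a positive constant even for $\mu=2$.
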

\begin{proof}
    The drift depends on the probability $P_{s, s-1}$ of successfully fixating a bit, and $P_{s,s+i}$ of unfixating $i$ previously fixated bits:
    \begin{equation*}
        \Delta^k_m(s) = P_{s, s-1}-\sum^{n'-s}_{i=1} i\cdot P_{s,s+i}.
    \end{equation*}
    We establish bounds for these terms:
    \begin{itemize}
        \item \textbf{Progress:} If the macro step starts in a flip phase, then consider the set $S$ of unfixated bits in $B$. When one of them is flipped, the probability of not flipping any bit outside of $S$ is at least $p_{\text{safe}}$. In this case a spread phase is entered, and the probability of successfully fixating the current mutation is at least $(e\mu)^{-C}$ by Lemma~\ref{lem:spread_prob}. Hence,
        $P_{s, s-1} \geq   (e\mu)^{-C} \cdot p_{\text{safe}}$.
        \item \textbf{Destruction:} A macro step consists of at most one flip phase and one spread phase, and destruction can only occur in the spread phase or the final iteration of a flip phase. In the latter, if we define $S'$ to be the set of all non-fixated positions in $B$ then we condition on at least one bit flip in $S'$. The same bound on the expected number of bit flips applies to both cases:
        \begin{equation*}
            \sum^{n'-s}_{i=1} i\cdot P_{s,s+i} \leq \mathbb{E}[\text{destroy}] \cdot \left(\mathbb E[T_{\text{spread}}]+1\right) \leq c_2\frac{n'}{n}\cdot\left(\frac{\mu (\ln \mu + 1)}{p_{\text{clone}}} + 1\right).
        \end{equation*}
    \end{itemize}
    Combining these bounds yields
    \begin{equation*}
        \Delta^k_m(s) \geq (e\mu)^{-C} \cdot p_{\text{safe}} - c_2\frac{n'}{n}\cdot\left(\frac{\mu (\ln \mu + 1)}{p_{\text{clone}}} + 1\right).
    \end{equation*}
    Note that $n' \ge 1$ by Lemma~\ref{lem:mulogmu}, and recall from Lemma~\ref{lem:spread_prob} that $C = \frac{c_2n'\mu}{np_{\text{clone}}} = \frac{c}{\ln \mu}$. Substituting this and $n'$ yields
    \begin{align*}
        \Delta^k_m(s) &\geq e^{-c(1+1/\ln\mu)} \cdot p_{\text{safe}} - c\cdot\frac{p_{\text{clone}}}{\mu \ln \mu}\cdot\left(\frac{\mu (\ln \mu + 1)}{p_{\text{clone}}} + 1\right)\\
        &\geq p_{\text{safe}}\left(1-c\left(1+\frac{1}{\ln \mu}\right)\right)- c\cdot \frac{(\ln \mu + 1)}{\ln \mu} - c\cdot \frac{p_{\text{clone}}}{ \mu \ln \mu}.
    \end{align*}
    The right hand side is of the form $p_{\text{safe}}-c\cdot O(1)$. Thus if the constant $c > 0$ is sufficiently small then the drift is $\Omega(1)$.~\qed
\end{proof}

\begin{theorem}
    \label{thm:main_bound}
    The expected optimization time of the \mpoEA on \binval with a conservative mutation operator is ${O}\left(\mu \log \mu \cdot n \log n\right)$ for $\mu = o(n / \log n)$, and ${O}\left(\mu \log \mu \cdot n\right)$ for $\mu = \Omega(n / \log n)$.
\end{theorem}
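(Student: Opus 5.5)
We split into the two regimes of $\mu$ and, in the main regime $\mu = o(n/\log n)$, bound the time per block and sum over blocks.

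\textbf{The regime $\mu = o(n/\log n)$.} We fix the block length $n' = cnp_{\text{clone}}/(c_2\mu\ln\mu)$ as in Lemma~\ref{lem:macro_drift}, so there are $\nu = n/n' = \Theta(\mu\log\mu)$ blocks. By linearity, $\mathbb E[T] = \sum_{i=1}^{\nu}\mathbb E[T_i]$.

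Inside a block, we reduce the number of non-fixated bits from $k$ to $k-1$ for $k = n',\dots,1$ successively. For each $k$ we apply the additive drift theorem (Theorem~\ref{thm:additive-drift}) to the shifted macro-step process $X^{k-1}_m$. Lemma~\ref{lem:macro_drift} gives drift $\Omega(1)$ in every nonzero state. The drift must be applied to a process absorbed at $0$, so we stop the process at the first macro step where $X_m \le k-1$. The starting value is $X^{k-1}_0 \le 1$, but destruction can push the process above its start. Since $X_m \le n'$ and the drift holds for every state $s \ge k$, the expected number of macro steps until $X_m \le k-1$ is $O(1)$.

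Next we convert macro steps into iterations. A macro step from a state with $s \ge k$ unfixated bits consists of at most one flip phase and one spread phase. By Lemma~\ref{lem:phase_durations} its expected length is at most $n/(c_1 s) + \mu(\ln\mu+1)/p_{\text{clone}} \le n/(c_1k) + O(\mu\log\mu)$. This bound holds conditionally on the history, so a Wald-type argument applies: the expected number of iterations spent at level $k$ is $O(1)\cdot\bigl(n/k + \mu\log\mu\bigr)$.

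Summing over $k$ gives, per block,
\[
\mathbb E[T_i] = O\bigl(n H_{n'} + n'\mu\log\mu\bigr) = O\bigl(n\log n' + n\bigr),
\]
using $n'\mu\log\mu = \Theta(n)$. Multiplying by $\nu = \Theta(\mu\log\mu)$ yields
\[
\mathbb E[T] = O\bigl(\mu\log\mu\cdot n\max\{1,\log(n/(\mu\log\mu))\}\bigr) = O(\mu\log\mu\cdot n\log n).
\]

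\textbf{The main obstacle.} The delicate point is the bookkeeping when $X$ moves up, i.e.\ when destruction or interference reopens earlier bits. After such an event the process is at a level $s > k$. The cleanest approach is to apply drift to $X_m$ directly with target $k-1$; the per-macro-step cost is then bounded by $n/(c_1k)$ uniformly, since $s \ge k$ only makes flip phases shorter. We should also check that a fixated bit in an earlier block never unfixates. This holds because fixation of a leading segment is permanent for \binval: once all preceding blocks are all-ones in every individual, no offspring with a zero there survives selection. Hence the blocks are processed independently in sequence.

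\textbf{The regime $\mu = \Omega(n/\log n)$.} Here we use the trivial block choice $n' = 1$, i.e.\ $\nu = n$ blocks of single bits. For a single bit, $C = c_2\mu/(np_{\text{clone}})$ may be large, so Lemma~\ref{lem:spread_prob} no longer gives a constant fixation probability. Instead we would argue directly that destruction of a fixated prefix bit is impossible, by the monotonicity just noted. Failure within the current single-bit block then only comes from the bit itself being flipped back before fixation. Each bit then costs $O(n + \mu\log\mu)$ iterations, which sums to $O(n^2 + n\mu\log\mu)$. This is $O(\mu\log\mu\cdot n)$ whenever $\mu\log\mu = \Omega(n)$, which holds in this regime since $\mu = \Omega(n/\log n)$ gives $\mu\log\mu = \Omega(n)$. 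If this single-bit argument turned out to be delicate, a fallback is to note that the claimed bound for this regime is at least the $O(\mu n + n\log n)$-type order one would expect from standard fitness-level arguments with population overhead.
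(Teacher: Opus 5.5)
Your proposal is correct and follows the paper's proof essentially step for step: per-level additive drift on macro steps via Lemma~\ref{lem:macro_drift}, flip phases bounded by $n/(c_1k)$ since every flip phase before level $k-1$ is reached has at least $k$ non-fixated bits, spread phases bounded by $O(\mu\log\mu)$, a harmonic sum per block multiplied by $\nu=\Theta(\mu\log\mu)$ blocks, and single-bit blocks when $\mu=\Omega(n/\log n)$. One small correction in that last regime: there is no failure mode at all, since an offspring with a $0$ at the current bit ranks below every individual with a $1$ there and can only displace an individual that also has a $0$, so the number of $1$s at that bit never decreases --- this fact, not a bound on ``flip-back'' failures (which never occur), is what justifies the $O(n+\mu\log\mu)$ cost per bit, and your fallback remark can be dropped.
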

\begin{proof}
    First consider $\mu = o(n/\log n)$. Let $\mathbb{E}[T_i]$ be the expected time to fixate block $i$. Let $T_{i,k}$ be the first point in time when there are $k$ fixated positions in block $i$. By Lemma \ref{lem:macro_drift} and the additive drift theorem, Theorem \ref{thm:additive-drift}, we need in expectation $O(1)$ macro steps to go from time $T_{i,k}$ to time $T_{i,k-1}$. Note that the number of non-fixated positions is not necessarily decreasing, but the additive drift theorem also applies to non-monotone processes. While $T_{i,k-1}$ is not reached, each flip phase has $k'$ non-fixated bits for some $k'\ge k$. Hence, by Lemma~\ref{lem:phase_durations} its expected duration is  $\mathbb E[T_{\text{flip}}^{k'}] \le n/(c_1k') \le n/(c_1k)$. Together with the bound $\mathbb{E}[T_{\text{spread}}] \leq \mu (\ln \mu + 1)/p_{\text{clone}}$ for a spread phase, and since each macro step consists of at most one flip phase and one spread phase, we obtain  
    \begin{align*}
        \mathbb E[T_i] & = \sum^{n'}_{k=1} \mathbb E[T_{i,k-1}-T_{i,{k}}] 
        \leq {O}(1) \cdot\sum^{n'}_{k=1}\left(\frac{n}{c_1k} + \frac{\mu (\ln \mu+1)}{p_{\text{clone}}}\right) \\
        &\leq {O}(1) \cdot \left(\frac{n}{c_1}(\ln n'+1) + n'\cdot \frac{\mu (\ln \mu+1)}{p_{\text{clone}}}\right).
    \end{align*}
    The total time is bounded by $\mathbb E[T] = \sum_{i=1}^\nu\mathbb E[T_i] = \frac{n}{n'} \mathbb{E}[T_i]$. This gives
    \begin{align*}
        \mathbb E[T] &\leq {O}(1) \cdot \left(\frac{n^2}{c_1n'}(\ln n'+1) + \frac{n\mu (\ln \mu+1)}{p_{\text{clone}}}\right).
    \end{align*}
    Now we substitute our chosen block size $n' = \Theta(\tfrac{n}{\mu \log \mu})$ into the first term:
    \begin{align*}
        \mathbb E[T] &= {O}(1) \cdot \left(\frac{c_2 n \mu \ln \mu}{c_1 c p_{\text{clone}}}(\ln n' + 1) + \frac{n\mu (\ln \mu+1)}{p_{\text{clone}}}\right) 
        = {O}\left(\mu \log \mu \cdot n \log n'\right).
    \end{align*}
    Since $n' \leq n$, we obtain the final bound $\mathbb E[T] = {O}\left(\mu \log \mu \cdot n \log n\right)$.
    
    For the case $\mu = \Omega(n / \log n)$, as already noted in~\cite{KrejcaNeumannWitt2025}, we can use blocks of length $n'=1$, in which case no interference or destruction can occur. We then easily obtain a bound of $ O(n\mu\log\mu + n^2)$, since each flip phase takes time $n$ and each spread phase takes time $ O(\mu\log \mu)$. The term $n^2$ can be omitted because $\mu\log \mu = \Omega(n)$.~\qed
\end{proof}

\begin{note}
The proof implies that the bound in Theorem~\ref{thm:main_bound} also holds for the time until \emph{all} individuals in the population are at the optimum.
\end{note}

%% file: chapters/03-zeroonemutation.tex
\subsection{$0/1$ Mutation}
\label{sec:zeroonemutation}

The $0/1$ mutation operator is defined as the stochastic choice between identity and a single-bit flip. Specifically, it produces a clone of the parent with a constant probability $p \in (0,1)$, and performs an RLS mutation, flipping exactly one uniformly chosen bit, with probability $1-p$.

\begin{corollary}
The $0/1$ mutation operator is a conservative mutation operator. Consequently, the expected optimization time of the \mpoEA on \binval using $0/1$ mutation is ${O}(\mu \log \mu \cdot n \log n)$.
\end{corollary}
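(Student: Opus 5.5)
The plan is to verify the four conditions of Definition~\ref{def:conservative_mutation} directly for $0/1$ mutation, and then invoke Theorem~\ref{thm:main_bound}. The key structural fact is that $0/1$ mutation flips either no bit (probability $p$) or exactly one bit, chosen uniformly from $[n]$ (probability $1-p$). Every condition then reduces to a one-line computation.

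We check the conditions in order, for arbitrary disjoint $S,S'\subseteq[n]$.
\begin{itemize}
    \item \emph{Clone probability.} We have $p_{\text{clone}} \ge p = \Omega(1)$, since $p$ is a constant in $(0,1)$.
    \item \emph{Safety.} Whenever some bit of $S$ is flipped, exactly one bit is flipped in total, and it lies in $S$. Hence $p_{\text{safe}} = 1$.
    \item \emph{Safe flip.} The event ``at least one bit of $S$ and none outside $S$ is flipped'' is exactly the event that the single flipped bit lies in $S$. So $p_{\text{flip,safe}} = (1-p)|S|/n$, and we may take $c_1 = 1-p > 0$.
    \item \emph{Destruction.} The number of flipped bits in $S$ is an indicator variable with expectation $(1-p)|S|/n$, so we may take $c_2 = 1$.
\end{itemize}

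The last condition also requires the bound to hold conditionally on flipping at least one bit of $S'$. This is the only point needing a moment's care. Under that conditioning, exactly one bit is flipped and it lies in $S'$. Because $S\cap S'=\emptyset$, no bit of $S$ is flipped, so the conditional expectation is $0 \le c_2|S|/n$. If $S'=\emptyset$ the conditioning event is empty and the condition is vacuous. If $S=\emptyset$ every bound is trivially zero, except $p_{\text{flip,safe}}\ge 0$, which also holds trivially.

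With all four conditions established, Theorem~\ref{thm:main_bound} applies and gives expected optimization time $O(\mu\log\mu\cdot n\log n)$ for $\mu=o(n/\log n)$. For $\mu=\Omega(n/\log n)$ it gives $O(\mu\log\mu\cdot n)$, which is even smaller. This yields the claimed bound in all regimes. I expect no real obstacle: the only subtlety is the conditional clause of the destruction condition, which disjointness resolves immediately.
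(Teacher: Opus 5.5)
Your proposal is correct and follows the paper's proof essentially verbatim. You verify the four conditions of Definition~\ref{def:conservative_mutation} with $p_{\text{clone}} = p$, $p_{\text{safe}} = 1$ and $p_{\text{flip,safe}} = (1-p)|S|/n$, and then invoke Theorem~\ref{thm:main_bound}; the only cosmetic differences are your choice $c_2 = 1$ (the paper takes $c_2 = 1-p$) and your more explicit handling of the conditional clause, where disjointness of $S$ and $S'$ makes the conditional expectation exactly $0$.
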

\begin{proof}
We verify the conditions of Definition~\ref{def:conservative_mutation} to apply Theorem~\ref{thm:main_bound}.

By definition, $p_{\text{clone}} = p =\Omega(1)$. At most one bit is flipped per invocation, hence we trivially get $p_{\text{safe}} = 1$. With probability $1-p$, a single bit is chosen uniformly at random and flipped. Therefore, also using the previous insight,
\begin{equation*}
    p_{\text{flip,safe}}  = \mathbb E[\text{destroy}] = (1-p)|S|/n,
    \label{eq:zero-one-pflip}
\end{equation*}
resulting in the desired bounds for $p_{\text{flip,safe}}$ and $\mathbb E[\text{destroy}]$ with constants $c_1 = c_2 = 1-p$. The bound on $\mathbb E[\text{destroy}]$ also holds when conditioning on flipping a bit in $S'$ as at most 1 bit is flipped per iteration.

As all conditions for a conservative mutation operator are met, the complexity bound follows from Theorem~\ref{thm:main_bound}.~\qed
\end{proof}

%% file: chapters/04-standardmutation.tex
\subsection{Standard Bit Mutation with $\chi \le 1$}
\label{sec:standardmutation}

The standard bit mutation operator modifies an individual by flipping each bit independently with probability $p = \chi/n$, for some mutation rate $\chi > 0$.

\begin{corollary}\label{cor:SBM_large_chi}
    For any constant mutation parameter $\chi \leq 1$, the standard bit mutation operator with rate $\chi/n$ is a conservative mutation operator. Consequently, the expected optimization time of the \mpoEA on \binval under standard bit mutation is ${O}(\mu \log \mu \cdot n \log n)$.
\end{corollary}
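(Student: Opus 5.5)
We verify the four conditions of Definition~\ref{def:conservative_mutation} for standard bit mutation with rate $\chi/n$, $\chi\le 1$ constant, and then invoke Theorem~\ref{thm:main_bound}. Throughout we use $(1-\chi/n)^{n}\ge (1-1/n)^n\ge 1/4$ for $n\ge 2$. This gives a constant lower bound on the probability that no bit at all, or no bit in any prescribed set, is flipped.

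\textbf{Clone and safe probability.} The clone probability is $p_{\text{clone}}=(1-\chi/n)^n\ge 1/4=\Omega(1)$. For $p_{\text{safe}}$ we use independence of bit flips: the event ``no bit of $[n]\setminus S$ is flipped'' is independent of the event ``some bit of $S$ is flipped''. Hence
\[
p_{\text{safe}}=(1-\chi/n)^{n-|S|}\ge 1/4 .
\]

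\textbf{Flip-safe probability.} The probability of flipping at least one bit of $S$ is $1-(1-\chi/n)^{|S|}$. With $|S|\le n$ and $\chi\le 1$ we have $\chi|S|/n\le 1$, so
\[
1-(1-\chi/n)^{|S|}\ge 1-e^{-\chi|S|/n}\ge (1-e^{-1})\,\chi|S|/n ,
\]
using concavity of $x\mapsto 1-e^{-x}$ on $[0,1]$. Multiplying by the independent factor $(1-\chi/n)^{n-|S|}\ge 1/4$ gives
\[
p_{\text{flip,safe}}\ge c_1|S|/n \qquad\text{with } c_1=\tfrac{(1-e^{-1})\chi}{4}>0 .
\]
This uses that $\chi$ is a positive constant.

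\textbf{Expected destruction.} Unconditionally, the expected number of flipped bits in $S$ is $\chi|S|/n\le |S|/n$. Now condition on the event that at least one bit of $S'$ is flipped, where $S'$ is disjoint from $S$. Since flips of bits in $S$ are independent of flips of bits in $S'$, the conditioning does not change the distribution on $S$. So the bound $\chi|S|/n$ still holds, and $c_2=\chi$ works. This step is the one where one might expect difficulty, namely the conditioning; it is harmless precisely because of independence and disjointness. The only real care needed is in the $p_{\text{flip,safe}}$ bound, which must be linear in $|S|$ uniformly up to $|S|=n$; the concavity estimate handles this.

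With all conditions verified, Theorem~\ref{thm:main_bound} gives the bound $O(\mu\log\mu\cdot n\log n)$ for $\mu=o(n/\log n)$. For $\mu=\Omega(n/\log n)$ the theorem gives the even smaller $O(\mu\log\mu\cdot n)$, which proves the corollary.
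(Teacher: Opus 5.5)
Your proof is correct and follows the paper's argument essentially step for step: you verify the four conditions of Definition~\ref{def:conservative_mutation} using independence of bit flips, and then apply Theorem~\ref{thm:main_bound}. The only cosmetic difference is the $p_{\text{flip,safe}}$ bound, where you combine $1-e^{-x}\ge(1-e^{-1})x$ on $[0,1]$ with the uniform bound $(1-\chi/n)^{n-|S|}\ge 1/4$, while the paper uses the geometric-sum inequality $1-(1-x)^k\ge kx(1-x)^{k-1}$ to obtain $\chi e^{-\chi}|S|/n$ in one step.
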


\begin{proof}
We verify the conditions of Definition~\ref{def:conservative_mutation} to invoke Theorem~\ref{thm:main_bound}.

\begin{itemize}
    \item $p_{\text{clone}}$: A clone is produced if no bits are flipped, yielding $p_{\text{clone}} = (1-\chi/n)^n = \Theta(1)$. 
    \item $p_{\text{safe}}$: For a subset of bits $S \subseteq [n]$, a mutation is considered safe if no bits in $[n] \setminus S$ are modified. Due to the independence of bit flips, we have $p_{\text{safe}} = (1-\chi/n)^{n - |S|} = \Theta(1)$.
    
    \item $p_{\text{flip,safe}}$: The probability $p_{\text{flip,safe}}$ of flipping at least one bit in $S$ while flipping no bits in $[n] \setminus S$ is given by
    \[ 
    p_{\text{flip,safe}} = \big(1 - \left(1 - \chi/n\right)^{|S|}\big) \cdot \left(1 - \chi/n\right)^{n - |S|}.
    \]
    We use the inequality 
    \[
    1-(1-x)^k = x\cdot \sum^{k-1}_{i=0}(1-x)^i \geq kx(1-x)^{k-1},
    \]
    for $x \in (0,1)$, derived from the geometric sum formula, to lower bound the first term by $\frac{\chi|S|}{n}(1-\frac{\chi}{n})^{|S|-1}$. Multiplying by the second term results in the lower bound $\frac{\chi|S|}{n}(1-\frac{\chi}{n})^{n-1} \geq \chi e^{-\chi} \frac{|S|}{n}$ for $\chi \leq 1$, hence there exists a constant $c_1$ such that $p_{\text{flip,safe}} \ge c_1|S|/n$.


    \item $\mathbb{E}\text{[destroy]}$: The number of bits flipped within $S$ follows a binomial distribution $B(|S|, \chi/n)$. The expected number of such flips is $\mathbb{E}[\text{destroy}] = \chi|S|/n$. This satisfies the condition $\mathbb{E}[\text{destroy}] \le c_2 \frac{|S|}{n}$ for a constant $c_2 \ge \chi$. When conditioning on flipping at least one bit in $S'$ the same bound holds as bits are flipped independently.
\end{itemize}

Since the operator is conservative, the result follows from Theorem~\ref{thm:main_bound}.~\qed
\end{proof}

\subsection{Standard Bit Mutation with $\chi \ge 1$}
\label{sec:generalchi}

To bound the runtime for larger mutation parameters $1 < \chi < n$, we couple the \mpoEA on \binval with mutation rate $\chi/n$ to $\lceil \chi\rceil$ sequential optimization processes with mutation rate at most $1/n$. 

Formally, we use the following proposition. For a (possibly randomized) algorithm $A$ let the \emph{lazy} version of $A$ with idle probability $p_{\text{idle}}$ be the algorithm which flips a coin in each step: with probability $p_{\text{idle}}$ it does nothing, otherwise it performs the next step of $A$. So the trajectory of the idle version of $A$ is the same as for $A$, but the idle version follows the trajectory at lower speed.

\begin{proposition}\label{prop:reduction_of_chi}
    Let $0 < \chi < n$. Consider a run of the \mpoEA with mutation rate $\chi/n$ on $n$-dimensional \binval. Let $B = \{n_b,n_b+1,\ldots,n_b+n'-1\}$ be a consecutive block of length $n'$, and assume that for any position $b < n_b$, $\forall x \in P_0: x_b = 1$. Let $B(P_t)$ be restriction of the population $P_t$ at time $t$ to $B$.
    
    Now consider the lazy \mpoEA on $n'$-dimensional \binval with $p_{\text{idle}} = 1-(1-\chi/n)^{n_b-1}$ and with mutation rate $\chi'/n'$, where $\chi' = \chi n'/n$. Let $P_t'$ be the population of this lazy version after $t$ steps, and assume $P_0' = B(P_0)$. 
    
    Then $B(P_t)$ and $P_t'$ follow the same distribution for all $t\ge 0$, i.e., for all populations $P \in\{0,1\}^{n'}$ we have $\Pr[B(P_t) = P] = \Pr[P_t' = P]$. 

    In particular, let $T := \inf_t \{\forall x \in B(P_t):x = 1^{n'}\}$ and $T' := \inf_t\{\forall x \in P'_t:x = 1^{n'}\}$. Then $T$ and $T'$ follow the same distribution.
    
\end{proposition}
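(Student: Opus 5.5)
The plan is to show that, while the prefix $\{1,\dots,n_b-1\}$ stays fixated, the projected process $B(P_t)$, viewed as a multiset of strings in $\{0,1\}^{n'}$, is a Markov chain. Its one-step transition kernel should depend only on $B(P_t)$ and coincide with the kernel of the lazy \mpoEA on $n'$-dimensional \binval. Since $P_0'=B(P_0)$, induction on $t$ then gives $B(P_t)\stackrel{d}{=}P_t'$ for every $t$. The statement about $T$ and $T'$ follows because $T$ and $T'$ are the hitting times of the same set $\{P: \forall x\in P: x=1^{n'}\}$ for two chains with equal initial state and equal kernel, so their distributions agree.

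\textbf{Step 1: the prefix stays fixated.} I would prove by induction that for all $t$, every $x\in P_t$ has $x_b=1$ for all $b<n_b$. Suppose this holds at time $t$ and the offspring $y$ flips some prefix bit. Let $b$ be the most significant flipped prefix bit; it is a $1\rightsquigarrow 0$ flip. Then $y$ is worse than every member of $P_t$, because all members agree with $y$ on positions $<b$ and have a $1$ at $b$. Hence $y$ is discarded and $P_{t+1}=P_t$. If no prefix bit is flipped, the prefix of $y$ is all ones as well.

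\textbf{Step 2: the one-step kernel.} Fix the full population $P_t$ satisfying the invariant. The parent $x$ is chosen uniformly, so $B(x)$ is uniform over the multiset $B(P_t)$. Under standard bit mutation the flips in the prefix, in $B$, and in the suffix are independent, which gives two cases.
\begin{itemize}
\item With probability $1-(1-\chi/n)^{n_b-1}=p_{\text{idle}}$ some prefix bit flips. By Step 1 nothing changes, which matches the idle step of the lazy algorithm.
\item Otherwise, independently of that event, $B(y)$ is obtained from $B(x)$ by flipping each bit with probability $\chi/n=\chi'/n'$. This is exactly the mutation of the lazy algorithm.
\end{itemize}

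\textbf{Step 3: selection in the non-idle case.} All individuals, including $y$, share the all-ones prefix. So $\binval$ order on $P_t\cup\{y\}$ is lexicographic: first by the $B$-part, then by the suffix. The discarded individual therefore has the lexicographically smallest $B$-part. If several individuals share that $B$-part, the suffix only decides which of them is removed, and the multiset $B(P_{t+1})$ is the same in each case. This equals the result of removing the worst element in the $n'$-dimensional process. Since $\binval$ is injective, ties there occur only between identical strings, so removing any copy gives the same multiset. The suffix bits therefore influence which individual is removed but not the projected multiset.

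\textbf{Main obstacle.} The delicate point is Step 3 together with Markovianity. I must argue that the conditional law of $B(P_{t+1})$ given the entire history depends only on $B(P_t)$; Steps 2 and 3 show exactly this. I must also be explicit that populations are compared as multisets, since the suffixes make individuals distinguishable in the original process but not in the projection. The statement "$\Pr[B(P_t)=P]=\Pr[P'_t=P]$" should be read with $P$ a multiset of size $\mu$.
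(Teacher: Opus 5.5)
Your proposal is correct and follows essentially the same route as the paper: an induction on $t$ via equality of one-step transition kernels. A flip in the prefix gives a rejected offspring, matching the idle step; the $B$-part is mutated with rate $\chi/n=\chi'/n'$; and selection is decided by the $B$-part except for ties among individuals with identical $B$-parts, which do not affect $B(P_{t+1})$. You are more explicit than the paper on three points: that the prefix stays fixated, that the projection is Markov with respect to the full history, and that $T$ and $T'$ have the same distribution because the path laws agree rather than only the one-time marginals (the paper's marginal-based deduction implicitly relies on the all-ones target set being absorbing).
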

\begin{proof}
    We prove the statement by induction over $t$. It is true for $t=0$ since $P_0' = B(P_0)$, so assume that as induction hypothesis that it is true for some $t\ge 0$. We need to show that both algorithm perform equivalent update steps, i.e., for all $P_1,P_2\in \{0,1\}^{n'}$ we have 
    \begin{align}\label{eq:induction_step}
        \Pr[B(P_{t+1})= P_2 \mid B(P_t) = P_1] = \Pr[P'_{t+1}= P_2 \mid P_t' = P_1],
    \end{align}
    because then for all $P_2$,
    \begin{align*}
        \Pr[B(P_{t+1}) = P_2] & = \sum_{P_1\in \{0,1\}^{n'}} \Pr[B(P_{t+1})= P_2 \mid B(P_t) = P_1]\cdot \Pr[B(P_t) = P_1] \\
        & = \sum_{P_1\in \{0,1\}^{n'}} \Pr[P'_{t+1}= P_2 \mid P_t' = P_1] \cdot \Pr[P_t' = P_1] = \Pr[P'_{t+1} = P_2].
    \end{align*}
    To show~\eqref{eq:induction_step}, consider one update step of the \mpoEA. With probability $1-(1-\chi/n)^{n_b-1}$ at least one of the first $n_b-1$ bits is flipped. In this case the offspring is rejected and the population stays the same. This corresponds to the probability $p_{\text{idle}}$ of the lazy \mpoEA. If none of the first $n_b-1$ bits is flipped, then each bit in $B$ is flipped with probability $\chi/n = \chi'/n'$, as for the lazy \mpoEA. The selection is based on the bits in $B$, since the whole population agrees on the more significant bits. Note that in case of ties within $B$, selection for the \mpoEA may be based on the less significant bits. However, this only breaks ties between individuals that are identical on $B$, hence $B(P_{t+1})$ is independent of which copy is chosen. Overall, mutation and selection are identical for the \mpoEA and the lazy \mpoEA, which proves~\eqref{eq:induction_step}.

    Finally, the statement on $T$ and $T'$ follows, because $T$ and $T'$ are defined in terms of $B(P_t)$ and $P_t'$, which follow the same distribution.~\qed
\end{proof}

Although we have only formulated Proposition~\ref{prop:reduction_of_chi} for the \mpoEA, the principle is applicable to other algorithms as well. This includes the $(\mu+\lambda)$ EA, but also genetic algorithms with uniform crossover (in which case idle steps only occur for mutation, not for crossover). As we will see below, it gives a very generic argument for ruling out thresholds where the runtime jumps to a different asymptotic level when $\chi$ is increased by an additive constant. In other situations, such jumps do occur frequently~\cite{DoerrJSWZ2013,LissovoiWitt2016,Lengler2019monotone,LenglerSchaller2019,AntipovDoerrYang2019}.

Proposition~\ref{prop:reduction_of_chi} is a rather basic observation, but it has a surprising consequences. Consider the \mpoEA with large mutation parameter $\chi$ on \binval. Then the time needed to optimize a block is coupled to a lazy optimization process of \binval for a smaller value of $n'$, and crucially, for a smaller mutation parameter $\chi'$. As an immediate consequence, we get that the runtime of the \mpoEA with mutation rate $\chi/n$ for $\chi > 1$ can be dominated by the sum of $\lceil\chi\rceil$ runtimes of a lazy \mpoEA with mutation parameter $\chi \in [1/2,1]$. 

\begin{theorem}\label{thm:coupling}
    Let $T_{\chi,n,\max}$ be the worst-case expected time, taken over all initial populations, until the \mpoEA with mutation rate $\chi/n$ on n-dimensional \binval creates a population that consists of copies of the optimum. Let $T_{[1/2,1],n,\max} := \sup_{\chi\in [1/2,1]} T_{\chi,n,\max}$. Then for all $1 \le \chi < n$ and all $n\in\mathbb{N}$,
\begin{align*}
    T_{\chi,n,\max} \le \big(1-\chi/n\big)^{-n}\cdot \lceil\chi \rceil \cdot T_{[1/2,1],\lceil n/\lceil\chi \rceil\rceil,\max}.
\end{align*}
\end{theorem}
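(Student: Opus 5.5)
The plan is to partition the $n$ positions into $m := \lceil\chi\rceil$ consecutive blocks $B_1,\dots,B_m$, each of length at most $n' := \lceil n/m\rceil$. The last block may be shorter; we can pad conceptually, or simply note that a shorter block only helps. We then optimize the blocks one after another and apply Proposition~\ref{prop:reduction_of_chi} to each block separately. Define $S_0 := 0$ and let $S_j$ be the first time at which every individual in the population has all bits of $B_1\cup\dots\cup B_j$ equal to $1$. By elitism on \binval, a leading segment that is fixated stays fixated forever. So at time $S_{j-1}$ the hypothesis of Proposition~\ref{prop:reduction_of_chi} holds for block $B_j$ with $n_b-1 = (j-1)n'$. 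The final population at time $S_m$ consists of copies of the optimum, so $T_{\chi,n,\max} \le \sum_j \sup \mathbb E[S_j - S_{j-1}]$.

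For block $j$, Proposition~\ref{prop:reduction_of_chi} together with the strong Markov property at time $S_{j-1}$ shows the following. Starting from the population at $S_{j-1}$, the time $S_j-S_{j-1}$ has the same distribution as the time $T'$ for a lazy \mpoEA on $n'$-dimensional \binval to reach a population of copies of $1^{n'}$. This lazy process has mutation parameter $\chi' = \chi n'/n$ and idle probability $p_{\text{idle}} = 1-(1-\chi/n)^{n_b-1}$. Here we use that $T$ in the proposition is exactly the first time all individuals are $1^{n'}$ on $B$. Next we check the range of $\chi'$: since $n' = \lceil n/m\rceil \ge n/m$ and $\chi/m \in (1/2,1]$ for $\chi\ge 1$, we get $\chi' \ge \chi/m > 1/2$. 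Also $n' < n/m + 1$, so $\chi' < \chi/m + \chi/n \le 1 + \chi/n$. This is slightly above $1$, and it is the main technical wrinkle. I would handle it either by choosing block lengths $\lfloor n/m\rfloor$ or $\lceil n/m\rceil$ so that $\chi'\le 1$ holds exactly, or by noting that the sup over $[1/2,1]$ in the statement is meant to cover this rounding (for instance $\chi n'/n \le 1$ whenever $m\mid n$). If needed, I would instead split into blocks of sizes $\lfloor n/m\rfloor,\lceil n/m\rceil$ and apply the statement with dimension $\lceil n/m\rceil$, using monotonicity in dimension only for the mutation-rate bookkeeping. Given $\chi'\in[1/2,1]$, the non-lazy process needs expected time at most $T_{[1/2,1],n',\max}$ from any initial population.

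The lazy version only inserts idle steps. Each real step happens with probability $1-p_{\text{idle}} = (1-\chi/n)^{n_b-1} \ge (1-\chi/n)^{n}$, independently of the trajectory. By Wald's identity, or simply because each real step costs a geometric number of lazy steps with mean $(1-p_{\text{idle}})^{-1}$, the expected lazy time is at most $(1-\chi/n)^{-n}\, T_{[1/2,1],n',\max}$. This bound is uniform over the starting population at $S_{j-1}$, so taking expectations conditionally and summing over the $m=\lceil\chi\rceil$ blocks gives
\[
T_{\chi,n,\max} \le (1-\chi/n)^{-n}\cdot\lceil\chi\rceil\cdot T_{[1/2,1],\lceil n/\lceil\chi\rceil\rceil,\max}.
\]
The main obstacle I expect is the rounding issue ensuring $\chi'\le 1$, together with handling a shorter last block. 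For the last block, a block of length $n''<n'$ has parameter $\chi n''/n$, which may drop below $1/2$. I would fix this by distributing the lengths evenly, so that all blocks have length $\lfloor n/m\rfloor$ or $\lceil n/m\rceil$, and by bounding the optimization time of smaller dimension via the maximal dimension, or by absorbing it into the definition of the supremum.
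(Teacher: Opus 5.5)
Your argument is the paper's proof. Both use $\lceil\chi\rceil$ blocks of length $n'=\lceil n/\lceil\chi\rceil\rceil$, apply the reduction proposition afresh at each $S_{j-1}$, and pay the idle-step factor $1/(1-p_{\text{idle}})\le(1-\chi/n)^{-n}$.

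For the last block, the paper makes your ``pad conceptually'' precise by shifting that block left so that it overlaps the penultimate one. Every block then has length exactly $n'$. The overlap lies in the already fixated prefix, because $n-n'\le(\lceil\chi\rceil-1)n'$, so the proposition's hypothesis holds. Every block also gets the same parameter $\chi n'/n\ge\chi/\lceil\chi\rceil>1/2$. Your alternative of distributing lengths evenly is worse. With $m=\lceil\chi\rceil$, a block of length $\lfloor n/m\rfloor$ can have parameter below $1/2$ (e.g.\ $\chi=1.01$, $n=5$ gives $0.404$). Its dimension also does not match the right-hand side, so you would need a monotonicity-in-dimension statement you have not proved.

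The rounding issue you flag is genuine, and the paper does not resolve it either. It simply asserts $\chi n'/n\in[1/2,1]$, which fails e.g.\ for $\chi=2$, $n=5$ (then $n'=3$ and $\chi'=6/5$). None of your proposed remedies closes it:
\begin{itemize}
\item If $m$ blocks of lengths $\ell_j$ cover $[n]$, their parameters $\chi\ell_j/n$ sum to at least $\chi$. So when $\chi$ is an integer not dividing $n$, some block has parameter strictly above $1$, however the lengths are chosen.
\item Saying the supremum is ``meant to cover'' this is not an argument.
\end{itemize}
What your argument, and the paper's, actually proves is the inequality with $T_{[1/2,2],n',\max}$ in place of $T_{[1/2,1],n',\max}$, because $\chi'<1+\chi/n<2$ always. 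This weaker form is harmless for the subsequent corollary. The conservative-operator conditions for standard bit mutation hold with constants uniform over parameters in $[1/2,2]$, at least for $n'\ge 3$. The bound exactly as stated, however, needs an extra comparison between parameters above and below $1$, such as monotonicity of $T_{\chi,n,\max}$ in $\chi$. Neither you nor the paper supplies this.
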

\begin{proof}
    We split the string of length $n$ into $\lceil\chi \rceil$ blocks of length $n' = \lceil n/\lceil\chi \rceil \rceil$, where we allow the last block to be shifted so that it overlaps with the penultimate block if $n'$ does not divide $n$. By Proposition~\ref{prop:reduction_of_chi}, after the $i$-th block is optimized, the time until the next block is optimized is bounded by the time until a lazy \mpoEA has optimized an $n'$-dimensional \binval instance. The mutation parameter of the lazy \mpoEA is $\chi n'/n \in [1/2,1]$. The idle steps increase the expected time by a factor $1/(1-p_{\text{idle}})$, where 
    $1-p_{\text{idle}} \ge (1-\chi/n)^{n}$.\qed
\end{proof}

Since Corollary~\ref{cor:SBM_large_chi} gives us a bound on $T_{[1/2,1],\lceil n/\lceil\chi \rceil\rceil,\max} = O(\mu \log \mu \cdot (n/\chi)\cdot \log n)$, we immediately get a runtime bound for large values of $\chi$.

\begin{corollary}
    For any mutation rate $1 \le \chi < n$, the expected optimization time of the \mpoEA  with standard bit mutation and mutation rate $\chi/n$ on \binval is ${O}((1-\chi/n)^{-n} \cdot \mu \log \mu \cdot n \log n )$.

    If $\chi = O(\sqrt{n})$ then this runtime is $O(e^{\chi} \cdot \mu \log \mu \cdot n \log n)$.
\end{corollary}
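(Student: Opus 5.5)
The plan is to combine Theorem~\ref{thm:coupling} with Corollary~\ref{cor:SBM_large_chi}, and then to simplify the prefactor $(1-\chi/n)^{-n}$ when $\chi=O(\sqrt n)$.

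\textbf{Step 1: apply the coupling.} By Theorem~\ref{thm:coupling},
\[
T_{\chi,n,\max} \le (1-\chi/n)^{-n}\cdot\lceil\chi\rceil\cdot T_{[1/2,1],n'',\max}, \qquad n'' := \lceil n/\lceil\chi\rceil\rceil .
\]
The expected optimization time from a uniform random initial population is at most the worst case, since the worst case is taken over all initial populations. So it suffices to bound the right-hand side.

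\textbf{Step 2: bound the small-rate process.} I would apply Corollary~\ref{cor:SBM_large_chi} (equivalently Theorem~\ref{thm:main_bound}) in dimension $n''$ with a mutation parameter $\chi'\in[1/2,1]$. By the Note after Theorem~\ref{thm:main_bound}, that bound also controls the time until the whole population consists of optimal copies, which is exactly what $T_{\cdot,\cdot,\max}$ measures. Two points need care here.

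First, the bound must be uniform in $\chi'\in[1/2,1]$ and in the initial population. The constants are uniform because $p_{\text{clone}}\ge (1-1/n'')^{n''}\ge 1/4$, $p_{\text{safe}}\ge 1/4$, $c_1=e^{-1}/2$ and $c_2=1$ all work for every $\chi'\in[1/2,1]$. The drift argument never uses the initial population, so worst case and average coincide up to constants.

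Second, Theorem~\ref{thm:main_bound} has two regimes in terms of $\mu$ relative to $n''$. For $\mu=o(n''/\log n'')$ it gives $O(\mu\log\mu\cdot n''\log n'')$. Otherwise it gives $O(\mu\log\mu\cdot n'' + \mu\log\mu\cdot n'')$, where the $n''^2$ term is absorbed because $\mu\log\mu=\Omega(n'')$. In either case we get $O(\mu\log\mu\cdot n''\log n)$.

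Multiplying by $\lceil\chi\rceil$ and using $\lceil\chi\rceil\, n''\le n+\lceil\chi\rceil\le 2n$ gives the first claim:
\[
O\big((1-\chi/n)^{-n}\cdot\mu\log\mu\cdot n\log n\big).
\]

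\textbf{Step 3: simplify the prefactor.} Use $\ln(1-x)\ge -x-x^2$ for $x\in[0,1/2]$. For $\chi\le n/2$ this gives
\[
(1-\chi/n)^{-n}\le \exp(\chi+\chi^2/n).
\]
When $\chi=O(\sqrt n)$ we have $\chi^2/n=O(1)$, so the prefactor is $O(e^\chi)$. This yields the second claim.

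\textbf{Main obstacle.} The delicate step is Step 2. Corollary~\ref{cor:SBM_large_chi} is stated for constant $\chi\le1$ in the original dimension $n$, so one has to check that the hidden constants do not depend on $\chi'$. One also has to handle the regime where $\mu$ is large compared with the reduced dimension $n''$, which can occur for large $\chi$. There the second case of Theorem~\ref{thm:main_bound} is needed rather than the first.
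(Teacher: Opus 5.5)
Your proposal is correct and follows the paper's route. The first claim comes from Theorem~\ref{thm:coupling} together with the bound of Corollary~\ref{cor:SBM_large_chi} and $\log n''\le\log n$; for the second claim, your inequality $\ln(1-x)\ge -x-x^2$ on $[0,1/2]$ is a quantitative version of the paper's Taylor expansion. Your extra checks (uniform constants in $\chi'$, the Note for full-population optimality, the large-$\mu$ regime) make explicit what the paper leaves implicit, with one slip: $(1-1/n'')^{n''}\ge 1/4$ fails when $n''=1$ (i.e.\ $\chi>n-1$), but there the one-bit process needs only $O(\mu\log\mu)$ steps, since each selected $0$-individual is flipped with probability $\chi'\ge 1/2$.
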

\begin{proof}
The first formula is immediately implied by Theorem~\ref{thm:coupling} and the bound     $T_{[1/2,1],\lceil n/\lceil\chi \rceil\rceil,\max} = O(\mu \log \mu \cdot (n/\chi)\cdot \log (n/\chi))$ from Corollary~\ref{cor:SBM_large_chi}, where we upper bound $\log(n/\chi) \le \log(n)$. 

For the second formula, the Taylor expansion of the function $\ln(1-x) = -x-O(x^2)$ for $x\to 0$ implies by exponentiating that $1-x = e^{-x-O(x^2)}$. We plug in $x=\chi/n$ and raise both sides to power $-n$, which gives for $\chi=O(\sqrt{n})$,
\begin{align*}\tag*{\qed}
    (1-\chi/n)^{-n} = (e^{-\chi/n - O((\chi/n)^2)})^{-n} = e^{\chi +O(\chi^2/n)} = e^{\chi +O(1)} 
    = O(e^{\chi}).
\end{align*}
\end{proof}

%% file: chapters/05-experiments.tex
\section{Experimental Results}
\label{sec:simulations}
We implemented the \mpoEA using lexicographical comparisons, which allowed us to avoid the large coefficients of the \binval fitness function.
This allows us to measure the runtime across for large problem sizes (specifically $n=1000$). For each parameter configuration, the simulation is initialised uniformly at random and run until the optimum (the all 1s string) is sampled. The reported runtime is averaged over 50 independent trials.

\subsubsection{Standard Bit Mutation Dynamics.}

To evaluate the predictive accuracy of our theoretical models, we first simulate the expected runtime using the standard bit mutation operator, where each bit is flipped independently with probability $\chi/n$. We tested three different mutation parameters: $\chi \in \{0.5, 1.0, 1.5\}$. 

To clearly observe the penalty induced by population size $\mu$, we normalise the empirical expected runtime by $\mu \cdot n$. The results are shown in Figure~\ref{fig:empirical_runtimes} (a).

\begin{figure}[htbp]
    \centering
\begin{subfigure}{0.4\textwidth}
\includegraphics[width=0.95\textwidth]{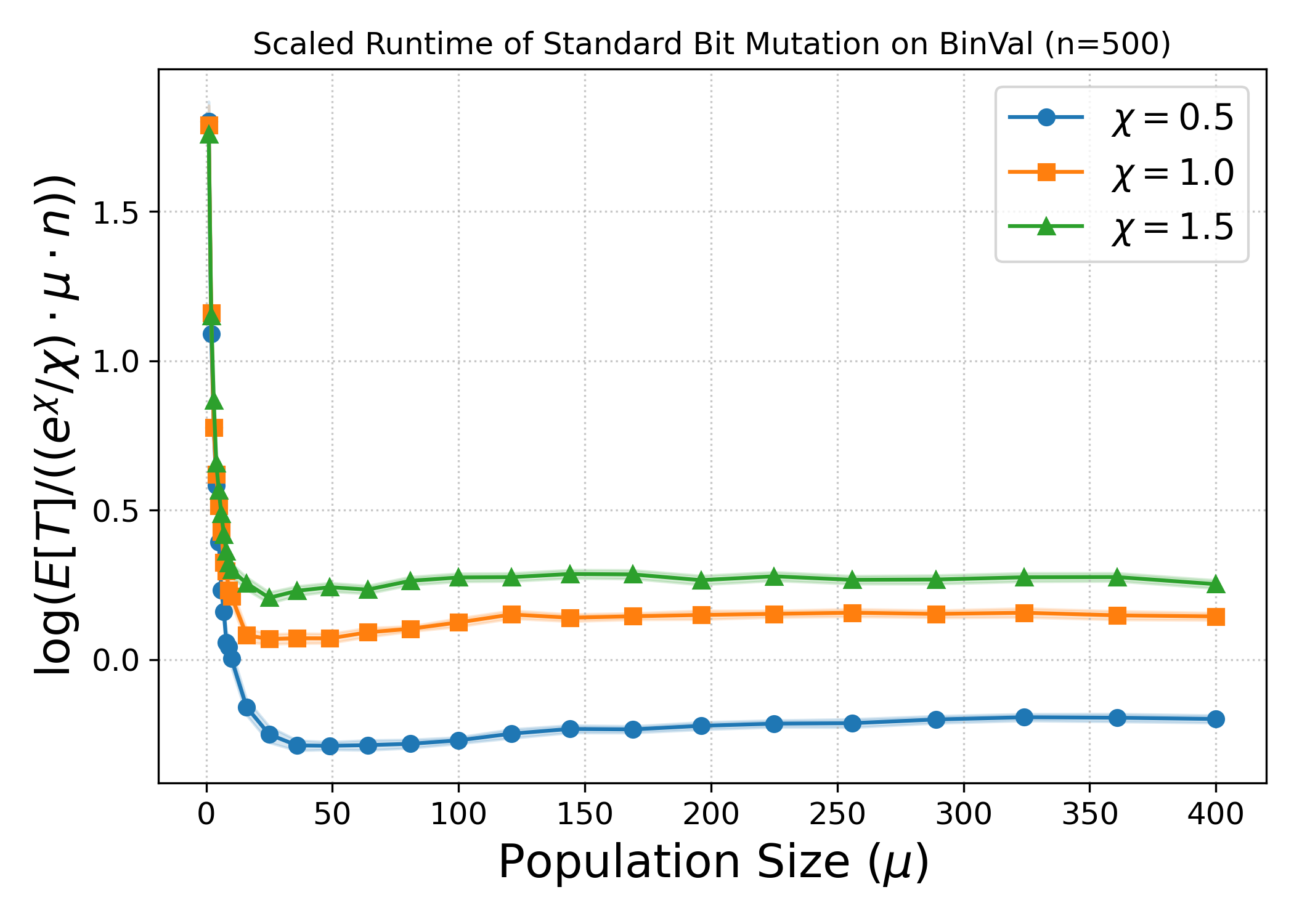}
\caption{}
\end{subfigure}  
\begin{subfigure}{0.49\textwidth}
\includegraphics[width=0.95\textwidth]{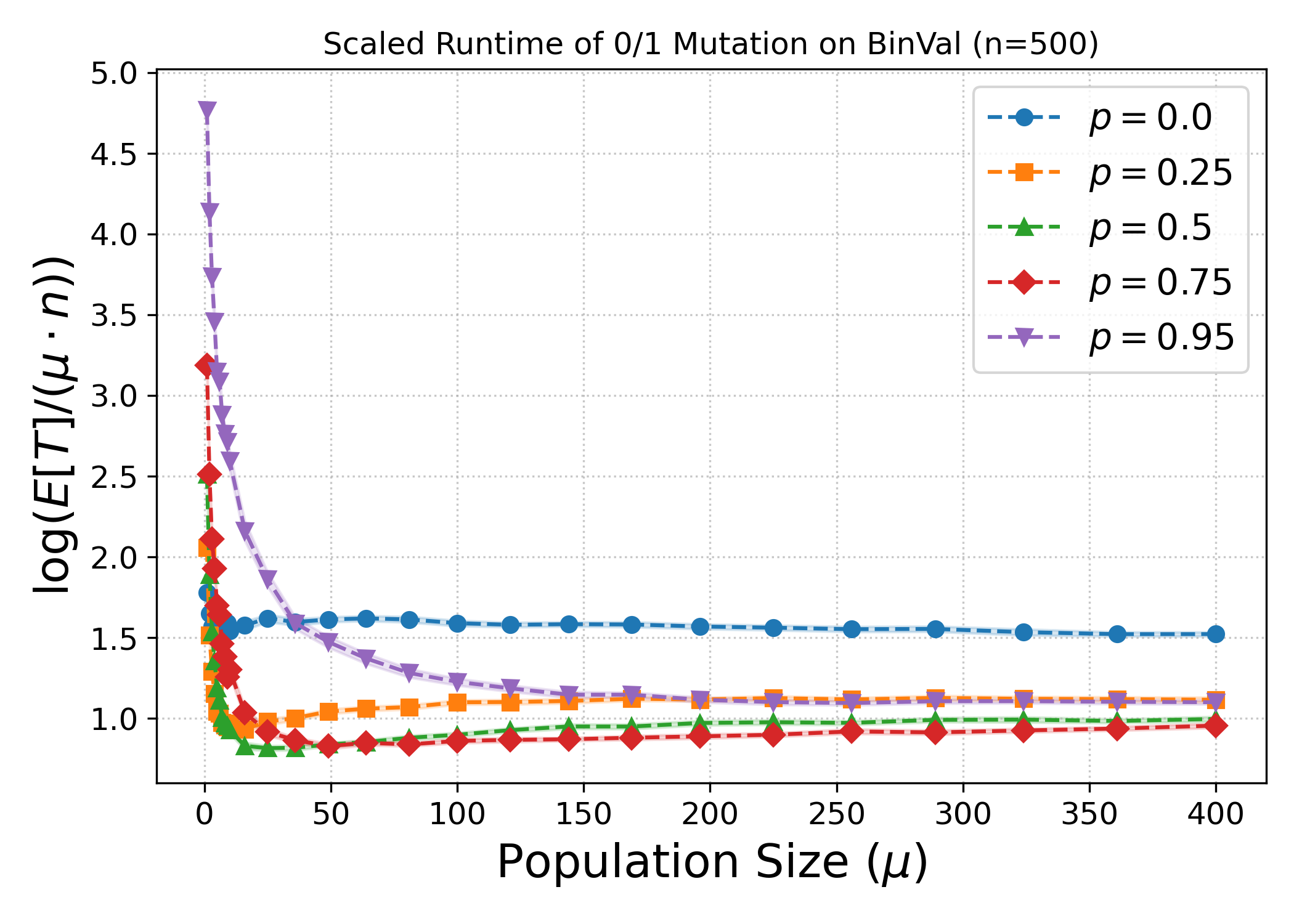}
\caption{}
\end{subfigure}  
    \caption{Logarithm of the expected runtime of the \mpoEA on \binval ($n=500$) plotted against the population size. Also plotted is the 95\% confidence interval as a shaded region. \emph{(a):} Standard bit mutation across varying mutation parameters $\chi$. The expected runtime is scaled by $\tfrac{e^{\chi}}{\chi} \mu n$ and then enclosed in a logarithm. \emph{(b):} 0/1 Mutation for varying zero-flip probabilities $p$. The expected runtime is scaled by $ \mu n$ and then enclosed in a logarithm.}
    \label{fig:empirical_runtimes}
\end{figure}

We observe that the expected runtime scaled by $\mu n$ starts at a large value, especially for $\mu=1$. This case is well understood, the runtime is $ (1+o(1))\tfrac{e^{\chi}}{\chi} n\ln n$. Indeed, an \emph{additive} term $(1+o(1))\tfrac{e^{\chi}}{\chi} n\ln n$ seems inevitable. Consider the smallest Hamming distance from the optimum in the population. Reducing this Hamming distance from $i$ to $i-1$ plausibly takes expected time at least $\approx \tfrac{e^{\chi}}{\chi}\cdot \frac{n}{i}$ for small values of $i$ (the time to reduce it by a single-bit flip), even in the optimistic case that \emph{all} individuals have Hamming distance $i$ from the optimum. Summing this over all $i$ yields the term $(1+o(1))\tfrac{e^{\chi}}{\chi} n\ln n$. However, the fact that the rescaled runtime in Figure~\ref{fig:empirical_runtimes} (a) decays rapidly for $\mu >1$ suggests that this is only an additive term which is not multiplied by $\mu$ in the real runtime.

 As $\mu$ increases further beyond $\mu \approx 10$, the plot stabilizes to a small constant. This seems to suggest that the the factor $\mu\log \mu$ in our theoretical bound is tight up to the $\log \mu$ term, which may be superfluous. In general, the experiments suggest a runtime of $\Theta(\mu n + n \ln n$). We leave it to future work to investigate those hypotheses further.
 

\subsubsection{$0/1$ Mutation.}

Figure~\ref{fig:empirical_runtimes} (b) shows our empirical evaluation for the 0/1 mutation operator. Under this regime, exactly one bit is flipped uniformly at random with probability $1-p$, and zero bits are flipped with probability $p$. This operator explicitly parametrises the generation of exact clones, allowing us to isolate the effect of duplicate creation on the algorithmic slowdown. We simulate the process for varying no-flip probabilities $p \in \{0.0, 0.25, 0.5, 0.75,0.95\}$. 

We observe a similar scaling of the runtime as in the standard-bit mutation, which is line with the similar theoretical results that we got for this setting. For $\mu=1$ we expect a runtime of $(1+o(1))n\ln n/(1-p)$, since the process is a variant of Random Local Search (RLS) that is slowed down by a factor of $(1-p)$. We observe that the runtime is minimized for an intermediate value of $p \approx 0.75$, while both smaller and larger values increase the runtime. Note that increasing $p$ all the way to $1$ would lead to infinite runtimes, since the algorithm could then only produce clones. As the value of $\mu$ increases, we observe that the plots seem to converge to a constant value. Similar to the standard bit mutation, the plot suggests a runtime of $\Theta(\mu n + n\log n)$. 

%% file: chapters/06-conclusion.tex
\section{Conclusion}
\label{sec:conclusion}
We have improved the best known upper bound on the expected runtime of the $(\mu+1)$~EA on \textsc{BinVal} from $O(\mu^5 n \log(n/\mu^4))$~\cite{KrejcaNeumannWitt2025} to $O(\mu \log \mu \cdot n \log n)$ for $\mu = \oh{n / \log n}$, a substantial reduction in the polynomial dependence on the population size~$\mu$. Our analysis builds on the block-based partitioning framework of~\cite{KrejcaNeumannWitt2025} but refines the treatment of interference during the spreading of mutations through the population, by tracking the number of remaining unfixed bits in the current block rather than bounding the interference probability uniformly by the block size. This more precise analysis removes the high-degree polynomial factors that appeared in the earlier bound.

Our improved bound recovers the $O(n \log n)$ runtime of the $(1+1)$~EA on linear functions~\cite{Witt2013} for constant~$\mu$ and grows only mildly with the population size. Since a lower bound of $O(\mu n + n \log n)$ is known for this problem~\cite{Witt2006}, our upper bound is consequently tight up to some $\log$ factors. Whether the factor~$\mu \log \mu$ is tight or can be further reduced --- ideally closer to the additive $\mu n$ overhead observed for \textsc{OneMax}~\cite{Witt2006,AntipovDoerr2021} --- remains an interesting open question. It would also be interesting to re-investigate to related questions that were studied in~\cite{KrejcaNeumannWitt2025}: the runtime of the RLS algorithm, i.e. $0/1$ mutation with $p=0$; and for all algorithms the time to approximate the optimum so that only the least significant $n/\mu$ bits are non-optimal.

On the methodological side, it would be natural to investigate whether the techniques developed here extend to all linear functions, or whether the specific exponential weight structure of \textsc{BinVal} is essential for the analysis. Finally, it would be interesting to see whether the class of conservative mutation operators can be extended to also include crossover operators, so that the runtime bound generalizes to the $(\mu+1)$~GA as well. Moreover, it would be interesting to study whether crossover can provably accelerate optimization of \textsc{BinVal}, as crossover has produced provable speedups on other benchmark families such as Jump~\cite{DangFriedrichKoetzingKrejcaLehreOlivetoSudholtSutton2018,DoerrEcharghaouiJamalKrejca2024}, and related positive evidence exists in dynamic \textsc{BinVal} settings~\cite{LenglerMeier2020}.